\newtheorem{theorem}{Theorem}
\newtheorem{lemma}[theorem]{Lemma}
\newtheorem{proposition}[theorem]{Proposition}
\theoremstyle{definition}
\renewenvironment{proof}{\vspace{-0.05in}\noindent{\bf Proof:}}%
        {\hspace*{\fill}$\Box$\par}
        {\hspace*{\fill}$\Box$\par}
        {\hspace*{\fill}$\Box$\par}
\newcommand{\E}{\mathbb{E}}
\newcommand{\R}{{\mathbb{R}}}
\def\script#1{\mathcal{#1}}
\def\set#1{\left\{#1\right\}}
\def\sep{\;|\;}
\def\Comment#1{\textsl{$\langle\!\langle$#1\/$\rangle\!\rangle$}}
\def\etal{{\em et al.}\xspace}
\def\norm#1{\left\|#1\right\|}
\def\grad{\nabla}
\def\Proj{\Pi}
\def\sA{\script{A}}
\def\sX{\script{X}}
\def\sY{\script{Y}}
\def\sP{\script{P}}
\def\sQ{\script{Q}}
\def\lovasz{Lov\'{a}sz\xspace}
\title{Random Coordinate Descent Methods for\\ Minimizing
Decomposable Submodular Functions}
\author[1]{Alina Ene}
\author[2]{Huy L. Nguy\~{\^{e}}n}
\affil[1]{Department of Computer Science and DIMAP \authorcr
			University of Warwick \authorcr {\tt
			A.Ene@dcs.warwick.ac.uk}}
\affil[2]{Simons Institute \authorcr
			University of California, Berkeley \authorcr {\tt
			hlnguyen@cs.princeton.edu}}
\begin{document}
\maketitle

\begin{abstract}
	Submodular function minimization is a fundamental optimization
	problem that arises in several applications in machine learning
	and computer vision. The problem is known to be solvable in
	polynomial time, but general purpose algorithms have high running
	times and are unsuitable for large-scale problems. Recent work
	have used convex optimization techniques to obtain very practical
	algorithms for minimizing functions that are sums of ``simple"
	functions. In this paper, we use random coordinate descent
	methods to obtain algorithms with faster \emph{linear}
	convergence rates and cheaper iteration costs.
	Compared to alternating projection
	methods, our algorithms do not rely on full-dimensional vector
	operations and they converge in significantly fewer iterations.
\end{abstract}

\section{Introduction}
\label{sec:intro}

Over the past few decades, there has been a significant progress on
minimizing submodular functions, leading to several polynomial time
algorithms for the problem
\cite{GrotschelLS81,Schrijver00,Iwata03,FleischerI03,Orlin09}.
Despite this intense focus, the running times of these algorithms are
high-order polynomials in the size of the data and designing faster
algorithms remains a central and challenging direction in submodular
optimization.

At the same time, technological advances have made it possible to
capture and store data at an ever increasing rate and level of
detail. A natural consequence of this ``big data" phenomenon is that
machine learning applications need to cope with data that is quite
large and it is growing at a fast pace. Thus there is an increasing
need for algorithms that are fast and scalable.

The general purpose algorithms for submodular minimization are
designed to provide worst-case guarantees even in settings where the
only structure that one can exploit is submodularity. At the other
extreme, graph cut algorithms are very efficient but they cannot
handle more general submodular functions. In many applications, the
functions strike a middle ground between these two extremes and it is
becoming increasingly more important to use their special structure
to obtain significantly faster algorithms.

Following \cite{Kolmogorov12,StobbeK10,JegelkaBS13,NishiharaJJ14}, we
consider the problem of minimizing \emph{decomposable} submodular
functions that can be expressed as a sum of \emph{simple} functions.
We use the term simple to refer to functions $F$ for which there is
an efficient algorithm for minimizing $F + w$, where $w$ is a linear
function. We assume that we are given black-box access to these
minimization procedures for simple functions.

Decomposable functions are a fairly rich class of functions and they
arise in several applications in machine learning and computer
vision. For example, they model higher-order potential functions for
MAP inference in Markov random fields, the cost functions in SVM
models for which the examples have only a small number of features,
and the graph and hypergraph cut functions in image segmentation.

The recent work of \cite{JegelkaBS13,Kolmogorov12,StobbeK10} has
developed several algorithms with very good empirical performance
that exploit the special structure of decomposable functions.  In
particular, \cite{JegelkaBS13} have shown that the problem of
minimizing decomposable submodular functions can be formulated as a
\emph{distance minimization} problem between two polytopes. This
formulation, when coupled with powerful convex optimization
techniques such as gradient descent or projection methods, it yields
algorithms that are very fast in practice and very simple to
implement \cite{JegelkaBS13}.

On the theoretical side, the convergence behaviour of these methods
is not very well understood. Very recently, Nishihara \etal
\cite{NishiharaJJ14} have made a significant progress in this
direction. Their work shows that the classical \emph{alternating
projections} method, when applied to the distance minimization
formulation, converges at a \emph{linear rate}.

{\bf Our contributions.}
In this work, we use random coordinate descent methods in order to
obtain algorithms for minimizing decomposable submodular functions
with faster convergence rates and cheaper iteration costs. We analyze
a standard and an accelerated random coordinate descent algorithm and
we show that they achieve linear convergence rates. Compared to
alternating projection methods, our algorithms do not rely on
full-dimensional vector operations and they are faster by a factor
equal to the number of simple functions. Moreover, our accelerated algorithm converges in a much smaller
number of iterations. We experimentally evaluate our algorithms on
image segmentation tasks and we show that they perform very well and
they converge much faster than the alternating projection method.

{\bf Submodular minimization.}
The first polynomial time algorithm for submodular optimization was
obtained by Gr\"{o}tschel \etal \cite{GrotschelLS81} using the
ellipsoid method. There are several combinatorial algorithms for the
problem \cite{Schrijver00,Iwata03,FleischerI03,Orlin09}. Among the
combinatorial methods, Orlin's algorithm \cite{Orlin09} achieves the
best time complexity of $O(n^5 T + n^6)$, where $n$ is the size of
the ground set and $T$ is the maximum amount of time it takes to
evaluate the function. Several algorithms have been proposed for
minimizing decomposable submodular functions
\cite{StobbeK10,Kolmogorov12,JegelkaBS13,NishiharaJJ14}. Stobbe and
Krause \cite{StobbeK10} use gradient descent methods with sublinear
convergence rates for minimizing sums of concave functions applied to
linear functions. Nishihara \etal \cite{NishiharaJJ14} give an
algorithm based on alternating projections that achieves a linear
convergence rate. 

\subsection{Preliminaries and Background}
\label{sec:prelim}

Let $V$ be a finite ground set of size $n$; without loss
of generality, $V = \set{1, 2, \dots, n}$. We view each point $w \in
\R^n$ as a modular set function $w(A) = \sum_{i \in A} w_i$ on the
ground set $V$.

A set function $F: 2^V \rightarrow \R$ is \emph{submodular} if $F(A)
+ F(B) \geq F(A \cap B) + F(A \cup B)$ for any two sets $A, B
\subseteq V$.
A set function $F_i: 2^V \rightarrow \R$ is \emph{simple} if there is
a fast subroutine for minimizing $F_i + w$ for any modular function
$w \in \R^n$.

In this paper, we consider the problem of minimizing a submodular
function $F: 2^V \rightarrow \R$ of the form $F = \sum_{i = 1}^r
F_i$, where each function $F_i$ is a simple submodular set function:
\begin{equation}
	\min_{A \subseteq V} F(A) \equiv \min_{A \subseteq V} \sum_{i =
	1}^r F_i(A) \tag{DSM}
\end{equation}
We assume without loss of generality that the function $F$ is
normalized, i.e., $F(\emptyset) = 0$. Additionally, we assume we are
given black-box access to oracles for minimizing $F_i + w$ for each
function $F_i$ in the decomposition and each $w \in \R^n$.

The \emph{base polytope} $B(F)$ of $F$ is defined as follows.
\begin{equation*}
\begin{split}
	B(F) &= \{w \in \R^n \sep w(A) \leq F(A)
	\; \text{ for all } A \subseteq V,\\
	&\qquad\qquad\qquad w(V) = F(V)\} %\tag{Base Polytope}
\end{split}
\end{equation*}
The discrete problem (DSM)\footnote{DSM stands for decomposable
submodular function minimization.} admits an exact convex programming
relaxation based on the \lovasz extension of a submodular function.
The Lov\'{a}sz extension $f$ of $F$ can be written as the support
function of the base polytope $B(F)$:
\begin{equation*} 
%\label{eq:lovasz-extension}
	f(x) = \max_{w \in B(F)} \langle w, x \rangle \;\; \forall x \in
	\R^n
%\tag{\lovasz Extension}
\end{equation*}
Even though the base polytope $B(F)$ has exponentially many vertices,
the Lov\'{a}sz extension $f$ can be evaluated efficiently using the
greedy algorithm of Edmonds (see for example \cite{Schrijver-book}).
Given any point $x \in \R^n$, Edmonds' algorithm evaluates $f(x)$
using $O(n \log{n}) \times T$ time, where $T$ is the time needed to
evaluate the submodular function $F$.

Lov\'{a}sz showed that a set function $F$ is submodular if and only
if its Lov\'{a}sz extension $f$ is convex \cite{Lovasz83}. Thus we
can relax the problem of minimizing $F$ to the following non-smooth
convex optimization problem:
\begin{equation*}
	\min_{x \in [0, 1]^n} f(x) \equiv \min_{x \in [0, 1]^n}
	\sum_{i = 1}^r f_i(x)
%\tag{\lovasz Relaxation}
\end{equation*}
where $f_i$ is the Lov\'{a}sz extension of $F_i$.

The relaxation above is exact. Given a fractional solution $x$ to the
\lovasz Relaxation, the best threshold set of $x$ has cost at most
$f(x)$. 

An important drawback of the \lovasz relaxation is that its objective
function is not smooth. Following previous work
\cite{JegelkaBS13,NishiharaJJ14}, we
consider a proximal version of the problem ($\norm{\cdotp}$ denotes
the $\ell_2$-norm):
\begin{equation} \label{eq:proximal}
	\min_{x \in \R^n} \left( f(x) + {1 \over 2} \norm{x}^2
	\right) \equiv \min_{x \in \R^n} \sum_{i = 1}^r
	\left(f_i(x) + {1 \over 2r} \norm{x}^2 \right)
	\tag{Proximal}
\end{equation}
Given an optimal solution $x$ to the proximal problem $\min_{x \in
\R^n} \big(f(x) + {1 \over 2} \norm{x}^2 \big)$, we can construct an
optimal solution to the discrete problem (DSM) by thresholding $x$ at
zero; more precisely, the set $\set{v \in V \colon x(v) \geq 0}$ is
an optimal solution to (DSM) (Proposition~{8.6} in
\cite{Bach-monograph}).
\begin{lemma}[\cite{JegelkaBS13}] \label{lem:dual-prox}
	The dual of the proximal problem
	\begin{equation*}
		\min_{x \in \R^n} \sum_{i = 1}^r \left(f_i(x) + {1 \over
		2r} \norm{x}^2 \right)
		%\tag{P1}
	\end{equation*}
	is the problem
	\begin{equation*}
		\max_{y^{(1)} \in B(F_1), \dots, y^{(r)} \in B(F_r)} - {1
		\over 2} \norm{\sum_{i = 1}^r y^{(i)}}^2
		%\tag{P2}
	\end{equation*}
	The primal and dual variables are linked as $x = - \sum_{i = 1}^r
	y^{(i)}$.
\end{lemma}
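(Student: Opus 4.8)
The plan is to treat the proximal problem as a saddle-point problem and invoke a minimax theorem. First I would rewrite each Lov\'asz extension using its support-function representation from the preliminaries, $f_i(x) = \max_{y^{(i)} \in B(F_i)} \langle y^{(i)}, x\rangle$, so that the primal objective becomes
\[
\min_{x \in \R^n} \sum_{i=1}^r \left( f_i(x) + \frac{1}{2r}\norm{x}^2 \right) = \min_{x \in \R^n} \max_{y^{(1)} \in B(F_1), \dots, y^{(r)} \in B(F_r)} \left( \sum_{i=1}^r \langle y^{(i)}, x\rangle + \frac{1}{2}\norm{x}^2 \right),
\]
where I have used $\sum_i \frac{1}{2r}\norm{x}^2 = \frac{1}{2}\norm{x}^2$ and collapsed the $r$ independent maximizations into a single joint maximization over the product $B(F_1) \times \cdots \times B(F_r)$.

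The heart of the argument is to exchange the order of the min and the max. I would appeal to Sion's minimax theorem: the function $\Phi(x, y) = \sum_{i=1}^r \langle y^{(i)}, x\rangle + \frac{1}{2}\norm{x}^2$ is convex (indeed strictly convex, thanks to the quadratic) and continuous in $x$ on the convex set $\R^n$, and affine---hence concave---and continuous in $y = (y^{(1)}, \dots, y^{(r)})$; moreover each base polytope $B(F_i)$ is a nonempty, convex, compact set, being a bounded polyhedron, so the product domain is convex and compact. These hypotheses let me swap the two operators to obtain
\[
\max_{y^{(1)} \in B(F_1), \dots, y^{(r)} \in B(F_r)} \; \min_{x \in \R^n} \left( \sum_{i=1}^r \langle y^{(i)}, x\rangle + \frac{1}{2}\norm{x}^2 \right).
\]

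It remains to evaluate the inner minimization in closed form. Writing $s = \sum_{i=1}^r y^{(i)}$, the inner objective $\langle s, x\rangle + \frac{1}{2}\norm{x}^2$ is a strictly convex quadratic in $x$; setting its gradient $s + x$ to zero gives the unique minimizer $x = -s = -\sum_{i=1}^r y^{(i)}$, which is exactly the asserted primal--dual link, and substituting back yields the optimal value $-\frac{1}{2}\norm{s}^2 = -\frac{1}{2}\norm{\sum_{i=1}^r y^{(i)}}^2$. Plugging this into the outer maximization produces the claimed dual. I expect the main obstacle to be the rigorous justification of the minimax exchange; the quadratic regularizer is what makes this clean, since it renders $\Phi$ coercive in $x$ for every fixed $y$ and thus guarantees a finite inner minimizer, while compactness of the base polytopes supplies the remaining half of Sion's hypotheses.
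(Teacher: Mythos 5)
Your proposal is correct and follows essentially the same route as the paper: rewrite each $f_i$ as the support function of $B(F_i)$, exchange $\min$ and $\max$, and solve the inner quadratic minimization in closed form to get $x = -\sum_{i=1}^r y^{(i)}$ and the value $-\frac{1}{2}\norm{\sum_{i=1}^r y^{(i)}}^2$. The only difference is cosmetic: you justify the exchange via Sion's minimax theorem (using compactness of the base polytopes), while the paper cites Corollary~37.3.2 of Rockafellar; both are valid here.
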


\noindent
Lemma~\ref{lem:dual-prox} was proved in \cite{JegelkaBS13}; we
include a proof in Section~\ref{app:lem-dual-prox} for completeness.

We write the dual proximal problem in the following equivalent form:
\begin{equation}
	\min_{y^{(1)} \in B(F_1), \dots, y^{(r)} \in B(F_r)}
	\norm{\sum_{i = 1}^r y^{(i)}}^2 \tag{Prox-DSM}
\end{equation}
It follows from the discussion above that, given an optimal solution
$y = (y^{(1)}, \dots, y^{(r)})$ to (Prox-DSM), we can recover an
optimal solution to (DSM) by thresholding $x = - \sum_{i = 1}^r
y^{(i)}$ at zero.

\section{Random Coordinate Descent Algorithm}
\label{sec:rcdm}

\begin{figure}[t]
\begin{algo}
\underline{\bf RCDM Algorithm for (Prox-DSM)}
\\\Comment{We can take the initial point $y_0$ to be $0$}
\\ Start with $y_0 = (y^{(1)}_0, \dots, y^{(r)}_0) \in \sY$
\\ In each iteration $k$ ($k \geq 0$)
\\\; Pick an index $i_k \in \set{1, 2, \dots, r}$ uniformly at random
\\\; \Comment{Update the block $i_k$}
\\\; $y_{k + 1}^{(i_k)} \leftarrow \underset{y \in
B(F_{i_k})}{\argmin} \Big( \left\langle \grad_{i_k} g(y_k), y - y_k^{(i_k)}
\right\rangle$
\\\>\>\>\>\> $ + {L_{i_k} \over 2} \norm{y - y_k^{(i_k)}}^2 \Big)$
\end{algo}
\caption{Random block coordinate descent method for (Prox-DSM). It
finds a solution to (Prox-DSM) given access to an oracle for $\min_{y
\in B(F_i)} \left( \langle y, a \rangle + \norm{y}^2 \right)$.}
\label{fig:rcdm}
\end{figure}

In this section, we give an algorithm for the problem (Prox-DSM) that
is based on the random coordinate gradient descent method (RCDM) of
\cite{Nesterov10}. The algorithm is given in Figure~\ref{fig:rcdm}.
The algorithm is very easy to implement and it uses oracles for
problems of the form $\min_{y \in B(F_i)} \left( \langle y, a \rangle
+ \norm{y}^2 \right)$, where $i \in [r]$ and $a \in \R^n$. Since each
function $F_i$ is simple, we have such oracles that are very
efficient.

In the remainder of this section, we analyze the convergence rate of
the RCDM algorithm. We emphasize that the objective function of
(Prox-DSM) is \emph{not} strongly convex and thus we cannot use as a
black-box Nesterov's analysis of the RCDM method for minimizing
strongly convex functions. Instead, we exploit the special structure
of the problem to achieve convergence guarantees that match the rate
achievable for strong convex objectives with strong convexity
parameter $1 / (n^2 r)$. Our analysis shows that the RCDM algorithm
is faster by a factor of $r$ than the alternating projections
algorithm from \cite{NishiharaJJ14}.

\textbf{Outline of the analysis:}
Our analysis has two main components. First, we build on the work of
\cite{NishiharaJJ14} in order to prove a key theorem
(Theorem~\ref{thm:projection}). This theorem exploits the special
structure of the (Prox-DSM) problem and it allows us to overcome the
fact that the objective function of (Prox-DSM) is not strongly
convex. Second, we modify Nesterov's analysis of the RCDM algorithm
for minimizing strongly convex functions and we replace the strong
convexity guarantee by the guarantee given by
Theorem~\ref{thm:projection}.

We start by introducing some notation; for the most part, we follow
the notation of \cite{Nesterov10} and \cite{ NishiharaJJ14}. Let
$\R^{nr} = \bigotimes_{i = 1}^r \R^n$. We write a vector $y \in
\R^{nr}$ as $y = (y^{(1)}, \dots, y^{(r)})$, where each block
$y^{(i)}$ is an $n$-dimensional vector.
Let $\sY = \bigotimes_{i = 1}^r B(F_i)$ be the constraint set of
(Prox-DSM). Let $g: \R^{nr} \rightarrow \R$ be the objective function
of (Prox-DSM): $g(y) = \norm{\sum_{i = 1}^r y^{(i)}}^2$. We use
$\grad g$ to denote the gradient of $g$, i.e., the
$(nr)$-dimensional vector of partial derivatives. For each $i \in
\set{1, \dots, r}$, we use $\grad_i g(y) \in \R^n$ to denote the
$i$-th block of coordinates of $\grad g(y)$.

Let $S \in \R^{n \times nr}$ be the following
matrix:
\[
S = {1 \over \sqrt{r}} \Big[ \underbrace{I_n I_n\cdots
I_n}_{r\textnormal{ times}} \Big]
\]
Note that $g(y) = r \norm{S y}^2$ and $\grad g(y) = 2r S^T Sy$.
Additionally, for each $i \in \set{1, 2, \dots, r}$, $\grad_i g$ is
Lipschitz continuous with constant $L_i = 2$:
\begin{equation} \label{eq:grad-lipschitz}
	\norm{\grad_i g(x) - \grad_i g(y)} \leq L_i \norm{x^{(i)} -
	y^{(i)}},
\end{equation}
for all vectors $x, y \in \R^{nr}$ that differ only in block $i$.

Our first step is to prove the following key theorem that builds on
the work of \cite{NishiharaJJ14}.

\begin{theorem} \label{thm:projection}
	Let $y \in \sY$ be a feasible solution to (Prox-DSM). Let $y^*$
	be an optimal solution to (Prox-DSM) that minimizes $\norm{y -
	y^*}$. We have
		\[ \norm{S(y - y^*)} \geq {1 \over n r} \norm{y -
		y^*}.\]
\end{theorem}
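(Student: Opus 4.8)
The plan is to first identify the optimal set of (Prox-DSM) explicitly and then recast the claimed inequality as a Hoffman-type error bound for that set. Since $g(y) = r\norm{Sy}^2$ depends on $y$ only through $Sy = \frac{1}{\sqrt r}\sum_{i} y^{(i)}$, and since the Minkowski sum $B(F_1) + \cdots + B(F_r)$ equals the base polytope $B(F)$, the quantity $\sum_i y^{(i)}$ ranges over $B(F)$ as $y$ ranges over $\sY$. The minimum of $\norm{\cdot}^2$ over the convex set $B(F)$ is attained at a unique point $s^*$, so the optimal set is exactly $\sY^* = \set{w \in \sY \sep \sum_{i} w^{(i)} = s^*} = \sY \cap \set{w \sep Sw = Sy^*}$, an affine slice of the product polytope $\sY$. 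For the closest optimal point $y^*$ we therefore have $\norm{y - y^*} = \mathrm{dist}(y, \sY^*)$, and $S(y-y^*) = Sy - \frac{1}{\sqrt r}s^*$ is independent of the particular point chosen in $\sY^*$. The desired bound $\norm{S(y-y^*)} \ge \frac{1}{nr}\norm{y-y^*}$ is thus equivalent to the error bound $\mathrm{dist}(y, \sY^*) \le nr\,\norm{S(y - y^*)}$: controlling the distance to the optimal set by the residual of the single coupling constraint $Sw = Sy^*$.

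Next I would set up the geometry that makes the constant tractable. Decompose $z := y - y^*$ as $z = z_{\parallel} + z_{\perp}$, where $z_\parallel$ lies in the row space of $S$ (equivalently, all $r$ blocks of $z_\parallel$ are equal) and $z_\perp \in \ker S$ (equivalently, the blocks of $z_\perp$ sum to zero). Because $S$ acts as an isometry on its row space and annihilates $\ker S$ (note $SS^T = I_n$), we get $\norm{S z} = \norm{z_\parallel}$, while $\norm{z}^2 = \norm{z_\parallel}^2 + \norm{z_\perp}^2$. Hence it suffices to prove $\norm{z_\perp} \le \sqrt{(nr)^2 - 1}\,\norm{z_\parallel}$, i.e. to show that the displacement $y - y^*$ cannot be too nearly aligned with $\ker S$. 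Qualitatively this must hold because $y^*$ is the metric projection of $y$ onto $\sY^*$: the optimality condition $y - y^* \in N_{\sY^*}(y^*)$ forces $z$ to be one-sidedly orthogonal to the tangent directions of $\sY^*$ at $y^*$, all of which lie in $\ker S$, so a large component of $z$ in $\ker S$ would have to be supported entirely by the boundary constraints of the product polytope $\sY$.

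The heart of the argument, and the step I expect to be the main obstacle, is turning this qualitative angle condition into the quantitative constant $nr$. Here I would invoke the product structure of the normal cone, $N_{\sY}(y^*) = \bigotimes_{i} N_{B(F_i)}(y^{*(i)})$, together with a combinatorial estimate on the base polytopes $B(F_i)$, following and adapting the analysis of \cite{NishiharaJJ14}. Two features of base polytopes should drive the bound: every $B(F_i)$ is contained in the hyperplane $\set{w \sep w(V) = F_i(V)}$, so the residual $s^* - \sum_i y^{(i)}$ is automatically zero-sum over the ground set; and the edge/active-constraint structure of a base polytope is governed by coordinate transfers of the form $e_a - e_b$, which is exactly what lets one redistribute the sum-deficit across the $r$ blocks while staying feasible and losing only a factor polynomial in $n$ and $r$. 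Concretely, I would either (a) express $z_\perp$ through the active $B(F_i)$-constraints at $y^{*(i)}$ and bound the associated Hoffman constant, or (b) directly construct a competitor $w \in \sY^*$ from $y$ by transporting the (zero-sum) deficit $s^* - \sum_i y^{(i)}$ coordinatewise across the blocks, and show $\norm{y - w} \le n\sqrt{r}\,\norm{s^* - \sum_i y^{(i)}} = nr\,\norm{S(y-y^*)}$. The delicate point in both routes is that individual blocks of $y - y^*$ may be large while nearly cancelling in the sum; ruling this out quantitatively, using only submodular base-polytope geometry, is where the factor $nr$ is won and where the real work lies.
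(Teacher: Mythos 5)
Your setup is correct and, up to cosmetic differences, it is the same reduction the paper performs: you identify the optimal set as the slice $\set{w \in \sY \sep Sw = Sy^*}$ (the paper's Proposition~\ref{prop:optimal-E}), and you recast the claim as the error bound $d(y, E) \le nr\,\norm{S(y-y^*)}$; your orthogonal decomposition $z = z_\parallel + z_\perp$ with $\norm{Sz} = \norm{z_\parallel}$ is equivalent to the paper's observation that $d(y,\sQ') = \norm{S(y-y^*)}$ (Proposition~\ref{prop:distance-to-Q'}). The problem is what comes next: the entire quantitative content of the theorem, namely the constant $nr$, sits exactly in the step you defer to ``adapting the analysis of \cite{NishiharaJJ14}'' via your routes (a) or (b), and neither route is carried out. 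The claim that the zero-sum deficit $s^* - \sum_i y^{(i)}$ can be transported across the blocks at cost $n\sqrt{r}\,\norm{s^* - \sum_i y^{(i)}}$ while keeping every block inside its $B(F_i)$ is precisely the hard combinatorial fact; nothing in your sketch rules out the failure mode you yourself flag (individual blocks of $y - y^*$ large but nearly cancelling in the sum), so as written the proposal establishes only the reformulation, not the bound.

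What closes the gap --- and what the paper actually does --- is to invoke this as a packaged black box rather than re-derive it: Nishihara \etal \cite{NishiharaJJ14arxiv} (Corollary~5 combined with Proposition~11) show that for $\sP = \bigotimes_{i} B(F_i)$ and $\sQ = \set{y \sep Sy = 0}$ the quantity $\kappa_* = \sup_{x \in (\sP \cup \sQ') \setminus E} \, d(x,E) / \max\set{d(x,\sP), d(x,\sQ')}$ is at most $nr$ (the paper's Theorem~\ref{thm:kappa-upper-bound}); applying this with $x = y \in \sP \setminus E$, together with $d(y,E) = \norm{y-y^*}$ and $d(y,\sQ') = \norm{S(y-y^*)}$, yields the theorem in three lines. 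So your plan points in the right direction --- the estimate you want is indeed a Hoffman-type bound for products of base polytopes, and it does come from \cite{NishiharaJJ14} --- but it is a citation away, not a routine adaptation: the proof of their Proposition~11 is where the factor $nr$ is actually won, and your proposal currently contains no substitute for it.
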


\noindent
The proof of Theorem~\ref{thm:projection} uses the following key
result from \cite{NishiharaJJ14arxiv}. We will need the following
definitions from \cite{NishiharaJJ14arxiv}.

Let $d(K_1, K_2) = \inf\set{\norm{k_1 - k_2} \colon k_1 \in K_1, k_2
\in K_2}$ be the distance between sets $K_1$ and $K_2$. Let $\sP$ and
$\sQ$ be two closed convex sets in $\R^d$. Let $E
\subseteq \sP$ and $H \subseteq \sQ$ be the sets of closest points
\begin{align*}
  E &= \set{p \in \sP \colon d(p, \sQ) = d(\sP, \sQ)}\\		
  H &= \set{q \in \sQ \colon d(q, \sP) = d(\sP, \sQ)}
\end{align*}
Since $\sP$ and $\sQ$ are convex, for each point in $p \in E$, there
is a unique point $q \in H$ such that $d(p, q) = d(\sP, \sQ)$ and
vice versa. Let $v = \Proj_{\sQ - \sP} 0$; note that $H = E + v$. Let
$\sQ' = \sQ - v$; $\sQ'$ is a translated version of $\sQ$ and it
intersects $\sP$ at $E$. Let
	\[ \kappa_* = \sup_{x \in (\sP \cup \sQ') \setminus E} {d(x, E)
	\over \max\set{d(x, \sP), d(x, \sQ')}}.\]
By combining Corollary~{5} and Proposition~{11} from
\cite{NishiharaJJ14arxiv}, we obtain the following theorem.

\begin{theorem}[\cite{NishiharaJJ14}] \label{thm:kappa-upper-bound}
	If $\sP$ is the polyhedron $\bigotimes_{i = 1}^r B(F_i)$ and
	$\sQ$ is the polyhedron $\set{y \in \R^{nr} \colon \sum_{i = 1}^r
	y^{(i)} = 0}$, we have $\kappa_* \leq n r$.
\end{theorem}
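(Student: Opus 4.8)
The plan is to read $\kappa_*$ as a \emph{linear regularity} (Hoffman-type) modulus for the pair of convex sets $(\sP, \sQ')$ at their common intersection. By construction $\sQ' = \sQ - v$ is translated so that $\sP \cap \sQ' = E$, so the supremum defining $\kappa_*$ asks for the largest ratio $d(x, E)/\max\set{d(x,\sP), d(x,\sQ')}$ over $x \in (\sP \cup \sQ') \setminus E$. I would split into the two cases $x \in \sP$ and $x \in \sQ'$, since in each case one of the two distances in the denominator vanishes and the other governs the ratio. The substantive case is $x \in \sP \setminus E$, where the claim reduces to showing that for every $x = (x^{(1)}, \dots, x^{(r)}) \in \sP$ there is a point $y \in E$ with $\norm{x - y} \leq nr \cdot d(x, \sQ')$, since then $d(x,E) \leq \norm{x-y}$ gives the bound.

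Next I would make the denominator explicit. Because $S$ annihilates the direction space $\sQ$, every point of $\sQ'$ has the same image under $S$, and writing $t$ for the common block-sum $\sum_{i=1}^r y^{(i)}$ shared by all points of $E \subseteq \sQ'$, one gets $d(x, \sQ') = \frac{1}{\sqrt{r}} \norm{\sum_{i=1}^r x^{(i)} - t}$. Hence the target is to exhibit $y$ with $y^{(i)} \in B(F_i)$ for all $i$ and $\sum_{i=1}^r y^{(i)} = t$, moving from $x$ by at most $n\sqrt{r}\,\norm{\sum_{i=1}^r x^{(i)} - t}$. The naive correction $y^{(i)} = x^{(i)} - \frac{1}{r}(\sum_{j=1}^r x^{(j)} - t)$ realizes the orthogonal projection onto $\sQ'$ with ratio exactly $1$, but it destroys feasibility: the corrected blocks need not lie in $B(F_i)$. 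Restoring feasibility while controlling the extra displacement is the crux.

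To restore feasibility I would invoke the exchange property of base polytopes together with the identity $\sum_{i=1}^r B(F_i) = B(\sum_{i=1}^r F_i)$: both $s = \sum_{i=1}^r x^{(i)}$ and the target $t$ lie in the common base polytope $B(F)$, and one can pass from a decomposition of $s$ to a decomposition of $t$ by a sequence of coordinate exchanges, each increasing one ground-set coordinate and decreasing another by the same amount while keeping every block inside its polytope. Since the ground set has size $n$, the balanced discrepancy $\sum_{i=1}^r x^{(i)} - t$ routes through a chain of at most $n$ such exchanges, and distributing each exchange across the $r$ blocks introduces the factors $n$ and $r$; converting the resulting transportation estimate into the $\ell_2$ displacement bound produces the constant $nr$. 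The symmetric case $x \in \sQ' \setminus E$ is handled by first projecting onto $\sP$ and then rerouting the block-sum by the same exchange argument. This transportation-with-constants step is exactly what is packaged by the two cited results: Proposition~11 of \cite{NishiharaJJ14arxiv} reduces $\kappa_*$ to the relevant polyhedral regularity constant, and Corollary~5 bounds that constant by $nr$ for the product of base polytopes and the sum-zero subspace $\sQ$.

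The step I expect to be the main obstacle is tracking the constant through the exchange argument, i.e. bounding both the number and the magnitude of the exchanges needed to match the target sum while remaining feasible, and showing that the total $\ell_2$ cost is only a factor $nr$ times $d(x, \sQ')$ rather than something larger. This is where the fine geometry of the base polytopes --- their exchange capacities and the fact that each lives in the hyperplane $\set{w \colon w(V) = F_i(V)}$ --- enters quantitatively, and it is precisely the content supplied by Corollary~5 and Proposition~11.
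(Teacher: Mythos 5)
Your proposal is correct and takes essentially the same route as the paper: the paper gives no self-contained proof of this theorem, obtaining it directly by combining Corollary~5 and Proposition~11 of \cite{NishiharaJJ14arxiv}, which is precisely the citation your own argument bottoms out in. Your exchange/transportation sketch of the internals (reducing $d(x,E)$ to the block-sum discrepancy ${1 \over \sqrt{r}}\norm{\sum_{i=1}^r x^{(i)} - t}$ and re-decomposing feasibly with controlled cost) is a fair reconstruction of how the cited results are proved, but it carries no independent load here, since the quantitative step is imported rather than proved in this paper.
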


Now we are ready to prove Theorem~\ref{thm:projection}. Let
\begin{align*}
	\sP &= \bigotimes\nolimits_{i = 1}^r B(F_i) = \sY\\
	\sQ &= \set{y \in \R^{nr} \colon \sum\nolimits_{i = 1}^r y^{(i)} = 0} =
	\set{y \in \R^{nr} \colon S y = 0}
\end{align*}
We define $\sQ'$ and $\kappa_*$ as above.

Let $y$ and $y^*$ be the two points in the statement of the theorem.
Note that $y \in \sP$ and $y^* \in E$, since $E$ is the set of all
optimal solutions to (Prox-DSM) (see Proposition~\ref{prop:optimal-E}
in Section~\ref{app:rcdm} for a proof). We may assume that $y \notin
E$, since otherwise the theorem trivially holds. Since $y \in \sP
\setminus E$, we have
	\[ \kappa_* \geq {d(y, E) \over d(y, \sQ')} \]
Since $y^*$ is an optimal solution that is closest to $y$, we have
$d(y, E) = \norm{y - y^*}$. Using the fact that the rows of $S$ form
a basis for the orthogonal complement of $\sQ$, we can show that
$d(y, \sQ') = \norm{S(y - y^*)}$ (see
Proposition~\ref{prop:distance-to-Q'} in Section~\ref{app:rcdm} for a
proof). Therefore
	\[ \kappa_* \geq {\norm{y - y^*} \over \norm{S(y - y^*)}}.\]
Theorem~\ref{thm:projection} now follows from
Theorem~\ref{thm:kappa-upper-bound}.

In the remainder of this section, we use Nesterov's analysis
\cite{Nesterov10} in conjunction with Theorem~\ref{thm:projection} in
order to show that the RCDM algorithm converges at a linear rate.
Recall that $E$ is the set of all optimal solutions to (Prox-DSM).

\begin{theorem} \label{thm:rcdm-convergence}
	After $(k + 1)$ iterations of the RCDM algorithm, we have
		\[ \E\left[ d(y_{k}, E)^2 + g(y_{k + 1}) - g(y^*) \right]
		\leq \left(1 - {2 \over n^2r^2 + r} \right)^{k + 1} \left(
		d(y_0, E)^2 + g(y_0) - g(y^*) \right), \]
	where $y^* = \argmin_{y \in E} \norm{y - y_k}$ is the optimal
	solution that is closest to $y_k$.
\end{theorem}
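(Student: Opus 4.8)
The plan is to track the potential $\Phi_k = d(y_k,E)^2 + g(y_k) - g(y^*)$, where $g(y^*)$ is the (common) optimal value of (Prox-DSM), and to establish the one-step contraction $\E\!\left[\Phi_{k+1}\mid y_k\right] \le \gamma\,\Phi_k$ with $\gamma = 1 - \frac{2}{n^2r^2+r}$; iterating via the tower rule and using the monotonicity $g(y_{k+1})\le g(y_k)$ then yields the stated bound. Throughout I write $s_k = \sum_{i=1}^r y_k^{(i)}$ and $s^* = \sum_{i=1}^r (y^*)^{(i)}$, and set $\Delta_k = g(y_k)-g(y^*)$.

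First I would simplify the update and record an exact per-step identity. Since $\grad_i g(y) = 2\sum_j y^{(j)}$ and $L_i = 2$, completing the square shows the block step is the projection $y_{k+1}^{(i_k)} = \Proj_{B(F_{i_k})}\!\big(y_k^{(i_k)} - s_k\big)$. Because $g$ is an exact quadratic with block-$i$ curvature $L_i = 2$, the majorant is tight, giving $g(y_{k+1}) = g(y_k) + V_{i_k}$, where $V_i = \langle \grad_i g(y_k),\, t_i\rangle + \norm{t_i}^2$ and $t_i = y_{k+1}^{(i)} - y_k^{(i)}$. For the distance term, $d(y_{k+1},E)^2 \le \norm{y_{k+1}-y^*}^2$ (as $y^*\in E$), and since only block $i_k$ changes I apply the firm-nonexpansiveness inequality $\norm{\Proj_C(u)-w}^2 \le \norm{u-w}^2 - \norm{u-\Proj_C(u)}^2$ with $u = y_k^{(i_k)}-s_k$, $w=(y^*)^{(i_k)}$. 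Expanding and simplifying gives $d(y_{k+1},E)^2 \le d(y_k,E)^2 - 2\langle y_k^{(i_k)}-(y^*)^{(i_k)},\, s_k\rangle - V_{i_k}$. Adding this to the objective identity cancels $V_{i_k}$ and produces the clean inequality
\[
\Phi_{k+1} \;\le\; \Phi_k \;-\; 2\big\langle y_k^{(i_k)}-(y^*)^{(i_k)},\; s_k\big\rangle .
\]

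Next I take expectation over the uniformly random index $i_k$. Summing the cross term over the blocks gives $\E_{i_k}\!\big[2\langle y_k^{(i_k)}-(y^*)^{(i_k)}, s_k\rangle\big] = \tfrac{2}{r}\langle s_k - s^*, s_k\rangle = \tfrac{1}{r}\big(\Delta_k + \norm{s_k-s^*}^2\big)$, so that
\[
\E\!\left[\Phi_{k+1}\mid y_k\right] \;\le\; \Phi_k \;-\; \tfrac{1}{r}\big(\Delta_k + \norm{s_k - s^*}^2\big).
\]
It remains to lower-bound the decrease by $(1-\gamma)\Phi_k$. I use two consequences of Theorem~\ref{thm:projection}. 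From the first-order optimality of $y^*$ (the blocks of $\grad g(y^*)$ all equal $2s^*$, so $\langle s^*, s_k - s^*\rangle \ge 0$) one gets $\Delta_k = \norm{s_k}^2 - \norm{s^*}^2 \ge \norm{s_k - s^*}^2$. Since $S(y_k - y^*) = \tfrac{1}{\sqrt r}(s_k - s^*)$, Theorem~\ref{thm:projection} gives $\norm{s_k - s^*}^2 = r\norm{S(y_k-y^*)}^2 \ge \tfrac{1}{n^2 r}\,d(y_k,E)^2$.

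The crux is to convert these two facts into the factor $2$ in the numerator. Writing $a=\Delta_k$, $b=\norm{s_k-s^*}^2$, $c=d(y_k,E)^2$, the bounds read $a\ge b$ and $c\le n^2 r\,b$, and $\Phi_k = a+c$. Then
\[
(n^2r+1)(a+b) - 2(a+c) \;\ge\; (n^2r+1)(a+b) - 2(a+n^2r\,b) \;=\; (n^2r-1)(a-b)\;\ge\;0,
\]
so $\Delta_k + \norm{s_k-s^*}^2 \ge \tfrac{2}{n^2 r+1}\Phi_k$, and hence the decrease is at least $\tfrac{1}{r}\cdot\tfrac{2}{n^2 r+1}\Phi_k = (1-\gamma)\Phi_k$, giving $\E[\Phi_{k+1}\mid y_k]\le \gamma\Phi_k$. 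The main obstacle is precisely this factor of $2$: discarding the $\norm{s_k-s^*}^2$ term and using only $\Delta_k \ge \tfrac{1}{n^2 r}\,d(y_k,E)^2$ yields the weaker rate $1-\tfrac{1}{n^2r^2+r}$, so one must retain $\norm{s_k-s^*}^2$ and combine the two distinct strong-convexity-type consequences of Theorem~\ref{thm:projection} through the sign of $(n^2r-1)(\Delta_k-\norm{s_k-s^*}^2)$. The exact per-step objective identity (tightness of the quadratic majorant at $L_i=2$) and the firm-nonexpansiveness of the projection are what make the $V_{i_k}$ terms cancel and keep the argument lossless.
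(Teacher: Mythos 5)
Your proposal is correct and takes essentially the same route as the paper's own proof: the same potential $d(y_k,E)^2 + g(y_k) - g(y^*)$, the same expected one-step inequality (your firm-nonexpansiveness step is equivalent to the paper's combination of the update's first-order optimality (\ref{eqn:iter-opt}) with the descent bound (\ref{eqn:improve-f}), and your decrease term $\tfrac{1}{r}\left(\Delta_k + \norm{s_k - s^*}^2\right)$ is exactly $-\tfrac{1}{r}\langle \grad g(y_k), y^*-y_k\rangle$ from (\ref{eqn:expected-dist})), and the same two ingredients---Theorem~\ref{thm:projection} and first-order optimality of $y^*$---combined to extract the factor $\tfrac{2}{n^2r+1}$, which the paper obtains as a convex combination of (\ref{eqn:first-bound}) and (\ref{eqn:second-bound}) and you verify by the equivalent direct algebra $(n^2r-1)(a-b)\ge 0$. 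The remaining differences (exact quadratic identity versus Lipschitz majorant, projection form of the update) are cosmetic, and both arguments in fact bound $d(y_{k+1},E)^2 + g(y_{k+1})-g(y^*)$, the index mismatch in the stated theorem being a typo in the paper rather than a gap in your proof.
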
 

We devote the rest of this section to the proof of
Theorem~\ref{thm:rcdm-convergence}. We recall the following
well-known lemma, which we refer to as the first-order optimality
condition.

\begin{lemma}[Theorem~{2.2.5} in \cite{Nesterov-book}]
\label{lem:first-order-optimality}
	Let $f: \R^d \rightarrow \R$ be a differentiable convex function
	and let $Q \subseteq \R^d$ be a closed convex set. A point $x^*
	\in \R^d$ is a solution to the problem $\min_{x \in Q} f(x)$ if
	and only if
		\[ \langle \grad f(x^*), x - x^* \rangle \geq 0 \]
	for all $x \in Q$.
\end{lemma}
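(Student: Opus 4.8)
The plan is to prove the two implications of the biconditional separately, and in both directions the single tool needed is the fact that a differentiable convex function sits above its tangent hyperplane. Throughout I take $x^* \in Q$, as is implicit in the notion of a solution to the constrained problem.

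For the direction that the variational inequality is \emph{sufficient} for optimality, I would invoke the first-order characterization of convexity: since $f$ is convex and differentiable, $f(x) \ge f(x^*) + \langle \grad f(x^*), x - x^* \rangle$ for every $x \in \R^d$. Assuming $\langle \grad f(x^*), x - x^* \rangle \ge 0$ for all $x \in Q$ and substituting into this inequality gives $f(x) \ge f(x^*)$ for all $x \in Q$, so $x^*$ minimizes $f$ over $Q$.

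For the converse, that optimality \emph{implies} the variational inequality, I would fix an arbitrary $x \in Q$ and exploit the convexity of $Q$: the entire segment $x_t = x^* + t(x - x^*) = (1-t)x^* + tx$ lies in $Q$ for $t \in [0,1]$. Define the one-variable restriction $\phi(t) = f(x_t)$. Since $x^*$ is optimal over $Q$ and each $x_t$ is feasible, we have $\phi(t) \ge \phi(0)$ for all $t \in [0,1]$, so the right derivative satisfies $\phi'(0^+) = \lim_{t \to 0^+} (\phi(t) - \phi(0))/t \ge 0$. By the chain rule, $\phi'(0^+) = \langle \grad f(x^*), x - x^* \rangle$, which is exactly the claimed inequality. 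Note that this direction uses only differentiability of $f$ and convexity of $Q$, not convexity of $f$.

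There is no genuine obstacle here; the only point requiring care is bookkeeping of which hypotheses each direction consumes. Convexity of $f$ enters only in the sufficiency argument, where it is needed to promote the first-order condition at $x^*$ into a global lower bound; convexity of $Q$ enters only in the necessity argument, where it guarantees that the feasible segment from $x^*$ toward $x$ stays in $Q$ so that the directional derivative is controlled. Keeping these roles straight, each implication reduces to a single line.
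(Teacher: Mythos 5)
Your proof is correct, and it is the standard argument: the paper does not prove this lemma at all --- it is imported verbatim as Theorem~2.2.5 from Nesterov's book --- and your two-direction proof (the first-order convexity inequality $f(x) \geq f(x^*) + \langle \grad f(x^*), x - x^* \rangle$ for sufficiency, and the one-sided directional derivative along the feasible segment $(1-t)x^* + tx$ for necessity) is exactly the textbook proof of that cited result. Your bookkeeping remark is also accurate: convexity of $f$ is used only for sufficiency, convexity of $Q$ only for necessity, and closedness of $Q$ is not needed for the equivalence at all (it matters only for existence of a minimizer, which the lemma does not assert).
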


It follows from the first-order optimality condition for
$y^{(i_k)}_{k + 1}$ that, for any $z \in B(F_{i_k})$,
\begin{equation}\label{eqn:iter-opt}
	\left\langle \grad_{i_k} g(y_k) + L_{i_k} \left( y^{(i_k)}_{k+1}
	- y^{(i_k)}_k \right), z - y^{(i_k)}_{k+1} \right\rangle \ge 0
\end{equation}
We have
\begin{align}
	g(y_{k + 1}) &= g(y_k) + \int_0^1 \langle y_{k + 1} - y_k, \grad
	g(y_k + t(y_{k + 1} - y_k))\rangle dt \nonumber\\
	&= g(y_{k}) + \langle \grad g(y_{k}), y_{k+1} - y_{k}\rangle +
	\int_0^1 \big\langle y_{k+1} - y_{k}, \grad g(y_{k} + t(y_{k+1} -
	y_{k})) - \grad g(y_k)\big\rangle dt\nonumber\\
	&= g(y_{k}) + \left\langle \grad_{i_k} g(y_{k}), y_{k+1}^{(i_k)}
	- y_{k}^{(i_k)} \right\rangle + \int_0^1 \big\langle
	y_{k+1}^{(i_k)} - y_{k}^{(i_k)}, \grad_{i_k} g(y_{k} + t(y_{k+1}
	- y_{k})) - \grad_{i_k} g(y_k)\big\rangle dt\nonumber\\
	&\le g(y_{k}) + \left\langle \grad_{i_k} g(y_{k}),
	y_{k+1}^{(i_k)} - y_{k}^{(i_k)} \right\rangle + \int_0^1 \norm{
	y_{k+1}^{(i_k)} - y_{k}^{(i_k)}} \norm{\grad_{i_k} g(y_{k} +
	t(y_{k+1} - y_{k})) - \grad_{i_k} g(y_k)} dt\nonumber\\
	&\overset{(\ref{eq:grad-lipschitz})}{\le} g(y_{k}) + \left\langle
	\grad_{i_k} g(y_{k}), y_{k+1}^{(i_k)} - y_{k}^{(i_k)}
	\right\rangle + \int_0^1 L_{i_k} \norm{ y_{k+1}^{(i_k)} -
	y_{k}^{(i_k)}}^2 t dt\nonumber\\
	&= g(y_{k}) + \left\langle \grad_{i_k} g(y_{k}), y_{k+1}^{(i_k)}
	- y_{k}^{(i_k)} \right\rangle + \frac{L_{i_k}}{2}
	\norm{y_{k+1}^{(i_k)} - y_{k}^{(i_k)}}^2 \label{eqn:improve-f}
\end{align}
On the third line, we have used the fact that $y_k$ and $y_{k + 1}$
agree on all coordinate blocks except the $i_k$-th block. On the
fourth line, we have used the Cauchy-Schwartz inequality. On the
fifth line, we have used inequality (\ref{eq:grad-lipschitz}).

Let $y^* = \argmin_{y \in E} \norm{y - y_k}$ be the optimal solution
that is closest to $y_{k}$. We have
\begin{align}
	\norm{y_{k+1} - y^*}^2 &= \norm{y_{k} - y^*}^2 + \norm{y_{k+1} -
	y_{k}}^2 + 2\langle y_{k}-y^*, y_{k+1} - y_{k}\rangle \nonumber\\
	&= \norm{y_{k} - y^*}^2 - \norm{y_{k+1} - y_{k}}^2 + 2
	\left\langle y_{k+1} - y^{*}, y_{k+1} - y_{k} \right\rangle
	\nonumber\\
	&= \norm{y_{k} - y^*}^2 - \norm{y_{k+1}^{(i_k)} -
	y_{k}^{(i_k)}}^2 + 2 \left\langle
	y_{k+1}^{(i_k)}-(y^{*})^{(i_k)}, y_{k+1}^{(i_k)} - y_{k}^{(i_k)}
	\right\rangle \nonumber\\
	&\overset{(\ref{eqn:iter-opt})}{\le} \norm{y_{k} - y^*}^2 -
	\norm{y_{k+1}^{(i_k)} - y_{k}^{(i_k)}}^2 + {2 \over L_{i_k}}
	\left\langle \grad_{i_k} g(y_{k}), (y^*)^{(i_k)} -
	y_{k+1}^{(i_k)} \right\rangle \nonumber\\
	&= \norm{y_{k} - y^*}^2 + {2 \over L_{i_k}} \left\langle
	\grad_{i_k} g(y_{k}), (y^*)^{(i_k)} - y_{k}^{(i_k)}
	\right\rangle \nonumber\\
	&\qquad\qquad\qquad 
	-{2 \over L_{i_k}} \left({L_{i_k} \over 2} \norm{y_{k+1}^{(i_k)}
	- y_{k}^{(i_k)}}^2 + \left\langle \grad_{i_k} g(y_{k}),
	y_{k+1}^{(i_k)} - y_{k}^{(i_k)} \right\rangle\right) \nonumber\\
	&\overset{(\ref{eqn:improve-f})}{\le} \norm{y_{k} - y^*}^2 + {2
	\over L_{i_k}} \left\langle \grad_{i_k} g(y_{k}), (y^*)^{(i_k)} -
	y_{k}^{(i_k)} \right\rangle - {2 \over L_{i_k}} \left(g(y_{k+1})
	- g(y_{k})\right) \label{eqn:dist-to-E}
\end{align}
On the third line, we have used the fact that $y_k$ and $y_{k + 1}$
agree on all coordinate blocks except the $i_k$-th block. On the
fourth line, we have used the inequality (\ref{eqn:iter-opt}) with $z
= (y^*)^{(i_k)}$. On the last line, we have used inequality
(\ref{eqn:improve-f}). 

If we rearrange the terms of the inequality (\ref{eqn:dist-to-E}),
take expectation over $i_k$, and substitute $L_{i_k} = 2$, we obtain
\begin{align}
	\E_{i_k}\left[\norm{y_{k+1} - y^*}^2 + g(y_{k+1}) - g(y^*)
	\right] &\le \norm{y_{k} - y^*}^2 + g(y_{k}) - g(y^*) +
	\frac{1}{r}\langle \grad g(y_{k}), y^* - y_{k}\rangle
	\label{eqn:expected-dist}
\end{align}
We can upper bound $\langle \grad g(y_{k}), y^* - y_{k}\rangle$ as
follows.
\begin{align}
	\left\langle \grad g(y_{k}), y^* - y_{k} \right\rangle &= 2r
	\left\langle S^T S y_k, y^* - y_k \right\rangle \nonumber\\
	&=  r \left\langle S^T S y_k + S^T S y^*, y^* - y_k \right\rangle
	+ r \left\langle S^T S y_k - S^T S y^*, y^* - y_k \right\rangle
	\nonumber\\
	&=  r \left\langle S^T S y_k + S^T S y^*, y^* - y_k
	\right\rangle - r \norm{S(y_k - y^*)}^2 \nonumber\\ 
	&=  r \left\langle S (y_k + y^*), S(y^* - y_k)
	\right\rangle - r \norm{S(y_k - y^*)}^2 \nonumber\\ 
	&= (g(y^*) - g(y_k)) - r \norm{S(y_k - y^*)}^2 \nonumber\\
	&\le (g(y^*) - g(y_k)) - {1 \over n^2 r} \norm{y_k - y^*}^2
	\qquad \mbox{(By Theorem~\ref{thm:projection})}
	\label{eqn:first-bound}
\end{align}
On the first and fifth lines, we have used the fact that $\grad g(z)
= 2r S^T Sz$ and $g(z) = r \norm{Sz}^2$ for any $z \in \R^{nr}$. On
the last line, we have used Theorem~\ref{thm:projection}.

Since $y^*$ is an optimal solution to (Prox-DSM), the first-order
optimality condition gives us that
\begin{align}
	\langle \grad g(y^*), y^* - y_k \rangle = 2r \langle S^TS y^*, y^* -
	y_k \rangle \le 0 \label{eqn:opty}
\end{align}
Using the inequality above, we can also upper bound $\langle \grad
g(y_{k}), y^* - y_{k}\rangle$ as follows.  
\begin{align}
	\langle \grad g(y_{k}), y^* - y_{k} \rangle &= 2r \langle S^T S
	y_k, y^* - y_k \rangle \nonumber\\
	&= 2r\langle S^T S y^*, y^* - y_k \rangle + 2 r\langle S^T S y_k -
	S^T S y^*, y^* - y_k \rangle \nonumber\\
	&= 2r\langle S^T S y^*, y^* - y_k \rangle  - 2r \norm{S(y_k -
	y^*)}^2 \nonumber\\
	&\overset{(\ref{eqn:opty})}{\le} - 2r \norm{S(y_k -
	y^*)}^2 \nonumber\\
	&\le  - {2 \over n^2 r} \norm{y_k - y^*}^2 \qquad \mbox{(By
	Theorem~\ref{thm:projection})} \label{eqn:second-bound}
\end{align}
By taking ${2 \over n^2
r + 1} \times (\ref{eqn:first-bound}) + \left(1 - {2 \over n^2r + 1}
\right) \times (\ref{eqn:second-bound})$, we obtain
\begin{align}
	\left\langle \grad g(y_{k}), y^* - y_{k} \right\rangle &\le - {2
	\over n^2r + 1} \left(g(y_k) - g(y^*) + \norm{y_k - y^*}^2
	\right) \label{eqn:combined-bound}
\end{align}
By (\ref{eqn:expected-dist}) and (\ref{eqn:combined-bound}),
\begin{align*}
	\underset{i_k}{\E}\left[\norm{y_{k+1} - y^*}^2 + g(y_{k+1}) -
	g(y^*) \right] &\le \left(1 - {2 \over n^2 r^2 + r} \right)
	\left(g(y_k) - g(y^*) + \norm{y_k - y^*}^2 \right)
\end{align*}
Note that $d(y_{k+1}, E)^2 \le \norm{y_{k+1} - y^*}^2$ and $d(y_k,
E)^2 = \norm{y_k - y^*}^2$. Therefore
\begin{align*}
	\underset{i_k}{\E}\left[ d(y_{k+1}, E)^2 + g(y_{k+1}) - g(y^*)
	\right] &\le \left(1 - {2 \over n^2r^2 + r} \right) \left( d(y_k,
	E)^2 + g(y_{k}) - g(y^*) \right)
\end{align*}
By taking expectation over $\xi = (i_1, \dots, i_k)$, we get
\begin{align*}
	\underset{\xi}{\E}\left[ d(y_{k+1}, E)^2 + g(y_{k+1}) - g(y^*)
	\right] &\leq \left(1 - {2 \over n^2r^2 + r}
	\right)^{k + 1} \left( d(y_0, E)^2 + g(y_0) - g(y^*) \right)
\end{align*}
Therefore the proof of Theorem~\ref{thm:rcdm-convergence} is complete.

\section{Accelerated Coordinate Descent Algorithm}
\label{sec:acdm}

\begin{figure}[t]
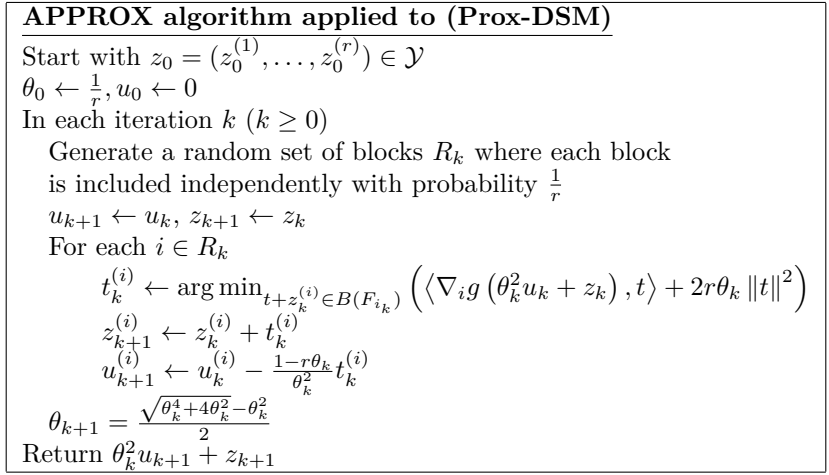

\begin{algo}
\underline{\bf APPROX algorithm applied to (Prox-DSM)}
\\ Start with $z_0 = (z^{(1)}_0, \dots, z^{(r)}_0) \in \sY$
\\ $\theta_0 \leftarrow {1 \over r}, u_0 \leftarrow 0$
\\ In each iteration $k$ ($k \geq 0$)
\\\> Generate a random set of blocks $R_k$ where each block
\\\> is included independently with probability ${1 \over r}$
\\\> $u_{k + 1} \leftarrow u_k$, $z_{k + 1} \leftarrow z_k$
\\\> For each $i \in R_k$
\\\>\> $t_k^{(i)} \leftarrow \argmin_{t + z_k^{(i)} \in B(F_{i_k})}
\left( \left\langle \grad_{i} g\left(\theta_k^2 u_k + z_k \right), t
\right\rangle + 2 r\theta_k \norm{t}^2 \right)$
\\\>\> $z_{k + 1}^{(i)} \leftarrow z_k^{(i)} + t_k^{(i)}$
\\\>\> $u_{k + 1}^{(i)} \leftarrow u_k^{(i)} - {1 - r\theta_k \over
\theta_k^2} t_k^{(i)}$
\\\> $\theta_{k + 1} = {\sqrt{\theta_k^4 + 4\theta_k^2} - \theta_k^2
\over 2}$
\\ Return $\theta_k^2 u_{k + 1} + z_{k + 1}$
\end{algo}
\caption{The APPROX algorithm of \cite{FR13} applied to (Prox-DSM).
It finds a solution to (Prox-DSM) given access to an oracle for
$\min_{y \in B(F_i)} \left( \langle y, a \rangle + \norm{y}^2
\right)$.}
\label{fig:approx}
\end{figure}

\begin{figure}[t]
\begin{algo}
\underline{{\bf ACDM Algorithm for (Prox-DSM)}}
\\\Comment{We can take the initial point $y_0$ to be $0$}
\\ Start with $y_0 = (y^{(1)}_0, \dots, y^{(r)}_0) \in \sY$
\\ In each epoch $\ell$ ($\ell \geq 0$)
\\\> Run the algorithm in Figure~\ref{fig:approx} for $(4 n r^{3/2} +
1)$
\\\> iterations with $y_{\ell}$ as its starting point ($z_0 =
y_{\ell})$
\\\> Let $y_{\ell + 1}$ be the vector returned by the algorithm
\end{algo}
\caption{Accelerated block coordinate descent method for (Prox-DSM).
It finds a solution to (Prox-DSM) given access to an oracle for
$\min_{y \in B(F_i)} \left( \langle y, a \rangle + \norm{y}^2
\right)$.}
\label{fig:acdm}
\end{figure}

In this section, we give an accelerated random coordinate descent
(ACDM) algorithm for (Prox-DSM). The algorithm uses the APPROX
algorithm of Fercoq and Richt\'{a}rik \cite{FR13} as a subroutine.
The APPROX algorithm (Algorithm~{2} in \cite{FR13}), when applied to
the (Prox-DSM) problem, yields the algorithm in
Figure~\ref{fig:approx}. The ACDM algorithm runs in a sequence of
epochs (see Figure~\ref{fig:acdm}). In each epoch, the algorithm
starts with the solution of the previous epoch and it runs the APPROX
algorithm for $\Theta(n r^{3/2})$ iterations. The solution
constructed by the APPROX algorithm will be the starting point of the
next epoch. Note that, for each $i$, the gradient $\grad_i g(y) =
2\sum_j y^{(j)}$ can be easily maintained at a cost of $O(n)$ per block
update, and thus the iteration cost is dominated by the time to
compute projection.

In the remainder of this section, we use the analysis of \cite{FR13}
together with Theorem~\ref{thm:projection} in order to show that the
ACDM algorithm converges at a linear rate. We follow the notation
used in Section~\ref{sec:rcdm}. 

\begin{theorem} \label{thm:acdm-convergence}
	After $\ell$ epochs of the ACDM algorithm (equivalently, $(4n
	r^{3/2} + 1)\ell$ iterations), we have
	\[
		\E[g(y_{\ell + 1}) - g(y^*)] \le {1 \over 2^{\ell + 1}}
		(g(y_0) - g(y^*)) 
	\]
\end{theorem}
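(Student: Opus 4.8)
The plan is to analyze a single epoch of the ACDM algorithm and show that each epoch reduces the objective gap $g(y_\ell)-g(y^*)$ by a constant factor of at least $1/2$, which immediately telescopes to the claimed bound. The core of the argument rests on the convergence guarantee of the APPROX algorithm from \cite{FR13}: when APPROX is run for $K$ iterations starting from $z_0$, its analysis provides a bound of the form
\begin{equation*}
	\E[g(\hat{y}) - g(y^*)] \le {\left(\tfrac{r-1}{r}\right) \over \left(K \cdot \tfrac{1}{r} + 1\right)^2}(g(z_0) - g(y^*)) + C \cdot \|z_0 - y^*\|^2,
\end{equation*}
where $\hat{y}$ is the returned vector and the second term arises because the objective $g$ is not strongly convex. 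The first term decays like $O(r/K^2)$, but the distance term $\|z_0 - y^*\|^2$ does not decay on its own. This is precisely where Theorem~\ref{thm:projection} enters.

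First I would write down the precise APPROX convergence inequality from \cite{FR13}, instantiated with block probabilities $1/r$ and the block-Lipschitz constants $L_i = 2$ established in (\ref{eq:grad-lipschitz}). The key observation is that Theorem~\ref{thm:projection} lets us convert the troublesome distance term into an objective-gap term: since $g(z_0) - g(y^*) = r\|S(z_0-y^*)\|^2 \ge \tfrac{1}{n^2 r}\|z_0 - y^*\|^2$ by Theorem~\ref{thm:projection} (using $g(z) = r\|Sz\|^2$ and that $Sy^* = 0$ for $y^* \in E$), we can bound $\|z_0 - y^*\|^2 \le n^2 r\,(g(z_0) - g(y^*))$. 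Substituting this into the APPROX bound expresses the entire right-hand side as a single constant multiple of $g(z_0) - g(y^*)$. The constant will be of the form $c(K)\cdot(g(z_0)-g(y^*))$ where $c(K) = O(r/K^2) + O(n^2 r \cdot \text{something})$, and the task reduces to checking that choosing $K = 4nr^{3/2}+1$ makes $c(K) \le 1/2$.

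Next I would verify the arithmetic of this constant. With $K = \Theta(nr^{3/2})$, the leading $O(r/K^2)$ term becomes $O(r/(n^2 r^3)) = O(1/(n^2 r^2))$, which is tiny. The delicate part is the term coming from $\|z_0-y^*\|^2$: the coefficient $C$ in the APPROX bound typically scales like $1/K^2$ as well (it comes from the same accumulated factor), so after multiplying by the $n^2 r$ from Theorem~\ref{thm:projection} one gets roughly $n^2 r / K^2 = n^2 r/(n^2 r^3) = 1/r^2$, again small, and the constant $4$ in the iteration count is chosen to push the total below $1/2$. One epoch therefore yields $\E[g(y_{\ell+1}) - g(y^*) \mid y_\ell] \le \tfrac{1}{2}(g(y_\ell) - g(y^*))$. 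Taking expectations over all randomness in epochs $0,\dots,\ell$ and iterating this contraction $(\ell+1)$ times gives $\E[g(y_{\ell+1})-g(y^*)] \le (1/2)^{\ell+1}(g(y_0)-g(y^*))$, which is the theorem.

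The main obstacle will be pinning down the exact form of the APPROX convergence bound and the precise value of its distance-dependent constant $C$, since the accelerated analysis of \cite{FR13} is technical and the bound must be transcribed carefully (with the correct dependence on the number of iterations, the sampling probability $1/r$, and the Lipschitz constants) before Theorem~\ref{thm:projection} can be plugged in. A secondary subtlety is handling the change of reference point $y^*$ across epochs: the optimal solution closest to $y_\ell$ may differ from the one closest to $y_{\ell+1}$, but because both the distance term and the objective gap are measured against the \emph{set} $E$ (and $g(y^*)$ is the common optimal value for every $y^* \in E$), the contraction can be stated cleanly in terms of $g(\cdot) - g(y^*)$ and the telescoping goes through without needing a fixed reference point.
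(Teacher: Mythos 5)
Your overall route coincides with the paper's: run the APPROX algorithm of \cite{FR13} for a fixed number of iterations per epoch, use Theorem~\ref{thm:projection} to turn the distance term $\norm{z_0 - y^*}^2$ into an objective-gap term, check that the per-epoch contraction factor is at most $1/2$, and telescope (your handling of the changing reference point $y^*$ across epochs is also exactly right). However, the justification you give for the crucial conversion step is wrong. You claim $g(z_0) - g(y^*) = r\norm{S(z_0 - y^*)}^2$ because ``$Sy^* = 0$ for $y^* \in E$.'' This is false in general: $E$ is the set of minimizers of $g(y) = r\norm{Sy}^2$ over $\sY$, and the minimum value is generically nonzero. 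Indeed, $-\sqrt{r}\,Sy^* = -\sum_{i=1}^r (y^*)^{(i)}$ is precisely the optimal solution $x^*$ of the proximal problem, and $x^* = 0$ only for trivial instances (it would force the thresholded set $\set{v : x^*(v) \geq 0}$ to be all of $V$ for every instance). The correct argument, which is what the paper uses, is to expand the quadratic $g$ exactly, $g(z_0) = g(y^*) + \langle \grad g(y^*), z_0 - y^* \rangle + r\norm{S(z_0 - y^*)}^2$, and then drop the cross term using the first-order optimality condition (Lemma~\ref{lem:first-order-optimality}) at $y^*$, valid because $z_0 \in \sY$. This yields the inequality $g(z_0) - g(y^*) \geq r\norm{S(z_0 - y^*)}^2 \geq \frac{1}{n^2 r}\norm{z_0 - y^*}^2$; your claimed equality holds only when the optimality condition is tight, and without invoking optimality of $y^*$ the step has no justification at all.

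There are two further problems in the quantitative part of your sketch. First, the coefficient of $\norm{z_0 - y^*}^2$ in the APPROX bound scales like $r^2/K^2$, not $1/K^2$: in the paper's instantiation (Theorem~\ref{thm:approx}) it is $\frac{8r^2}{(k - 1 + 2r)^2}$. After multiplying by the $n^2 r$ coming from Theorem~\ref{thm:projection}, this term equals roughly $\frac{8 n^2 r^3}{(4nr^{3/2})^2} = \frac{1}{2}$ with the paper's epoch length; it is the \emph{binding} constraint that forces $K = \Theta(n r^{3/2})$, not an $O(1/r^2)$ afterthought as your arithmetic suggests. If the coefficient really scaled like $1/K^2$, epochs of length $\Theta(n\sqrt{r})$ would suffice, which is not supported by \cite{FR13}. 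Second, instantiating Theorem~{3} of \cite{FR13} is not a matter of plugging in the block Lipschitz constants $L_i = 2$ from (\ref{eq:grad-lipschitz}): Assumption~{1} of \cite{FR13} is an expected separable overapproximation with respect to the random sampling of blocks, and verifying it for $g$ is a separate (short) computation, done in the paper as Lemma~\ref{lem:eso}, which yields the parameters $\tau = 1$ and $\nu_i = 4$ that produce the constants above. Your proposal defers both points to ``pinning down the exact form'' of the bound, but as sketched, the estimates you would check against are off in a way that changes the required epoch length, so the proof does not go through as written.
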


In the following lemma, we show that the objective function of
(Prox-DSM) satisfies Assumption~{1} in \cite{FR13} and thus the
convergence analysis given in \cite{FR13} can be applied to our
setting.

\begin{lemma} \label{lem:eso}
	Let $R \subseteq \set{1, 2, \dots, r}$ be a random subset of
	coordinate blocks with the property that each $i \in \set{1, 2,
	\dots, r}$ is in $R$ independently at random with probability
	$1/r$.
	Let $x$ and $h$ be two vectors in $\R^{nr}$. Let $h_R$ be the
	vector in $\R^{nr}$ such that $(h_R)^{(i)} = h^{(i)}$ for each
	block $i \in R$ and $(h_R)^{(i)} = 0$ otherwise. We have
		\[ \E\left[g\left(x + h_R \right)\right] \le g(x) + {1 \over
		r} \left\langle \grad g(x), h \right\rangle + {2 \over r}
		\norm{h}^2.\]
\end{lemma}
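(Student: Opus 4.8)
The plan is to exploit the fact that $g$ is a simple quadratic. Writing $X = \sum_{i=1}^r x^{(i)} \in \R^n$ and $H_R = \sum_{i \in R} h^{(i)} \in \R^n$, we have $g(x + h_R) = \norm{X + H_R}^2$, which expands as $\norm{X}^2 + 2\langle X, H_R\rangle + \norm{H_R}^2$. The first term is exactly $g(x)$, so the entire argument reduces to taking expectations of the cross term and the quadratic term over the random set $R$, and matching them against the two terms on the right-hand side.

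First I would handle the linear term. Since each block $i$ lies in $R$ independently with probability $1/r$, we have $\E[H_R] = {1 \over r}\sum_{i=1}^r h^{(i)}$, and therefore $\E[2\langle X, H_R\rangle] = {2 \over r}\langle X, \sum_i h^{(i)}\rangle$. Using the identities $\grad g(y) = 2r S^T S y$ and $Sy = {1 \over \sqrt{r}}\sum_i y^{(i)}$ recorded in Section~\ref{sec:rcdm}, one checks that $\langle \grad g(x), h\rangle = 2r\langle Sx, Sh\rangle = 2\langle X, \sum_i h^{(i)}\rangle$, so the cross term contributes exactly ${1 \over r}\langle \grad g(x), h\rangle$ in expectation, matching the corresponding term of the desired bound with equality.

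The remaining work is to show $\E[\norm{H_R}^2] \le {2 \over r}\norm{h}^2$. Letting $\chi_i$ denote the indicator of the event $i \in R$, I would write $\norm{H_R}^2 = \sum_{i,j} \chi_i \chi_j \langle h^{(i)}, h^{(j)}\rangle$ and use $\E[\chi_i^2] = \E[\chi_i] = 1/r$ together with $\E[\chi_i\chi_j] = 1/r^2$ for $i \neq j$ (by independence). Separating the diagonal from the off-diagonal terms, and using that the sum of $\langle h^{(i)}, h^{(j)}\rangle$ over all ordered pairs equals $\norm{\sum_i h^{(i)}}^2 = g(h)$ while $\sum_i \norm{h^{(i)}}^2 = \norm{h}^2$, this gives $\E[\norm{H_R}^2] = \left({1 \over r} - {1 \over r^2}\right)\norm{h}^2 + {1 \over r^2}\, g(h)$.

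The one genuinely nontrivial step is controlling $g(h)$. By Cauchy--Schwarz applied to the $r$ blocks, $g(h) = \norm{\sum_i h^{(i)}}^2 \le r \sum_i \norm{h^{(i)}}^2 = r\norm{h}^2$. Substituting this bound yields $\E[\norm{H_R}^2] \le \left({2 \over r} - {1 \over r^2}\right)\norm{h}^2 \le {2 \over r}\norm{h}^2$. Combining the three pieces completes the proof. I expect no real obstacle beyond keeping the bookkeeping of the second-moment computation straight; the constant $2/r$ (rather than the naive $1/r$) in the statement is precisely what the Cauchy--Schwarz slack on $g(h)$ buys us, and it is what makes the iteration-cost analysis of the APPROX subroutine go through.
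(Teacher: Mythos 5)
Your proposal is correct and follows essentially the same route as the paper's proof: expand the quadratic $g(x+h_R)$, compute the expectation exactly using the first and second moments of the block indicators (which makes the linear term contribute ${1 \over r}\langle \grad g(x), h\rangle$ exactly), and then bound the off-diagonal cross terms. The only cosmetic difference is in that last step: the paper bounds each term $\langle h^{(i)}, h^{(j)}\rangle$ individually via AM--GM, whereas you group the off-diagonal sum into $g(h) - \norm{h}^2$ and invoke Cauchy--Schwarz in the form $g(h) \le r\norm{h}^2$, which yields the identical bound $\left({2 \over r} - {1 \over r^2}\right)\norm{h}^2 \le {2 \over r}\norm{h}^2$.
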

\begin{proof}
	We have
	\begin{align*}
		\E\left[g\left(x + h_R \right) \right] &= \E\left[r \norm{S(x
		+ h_R)}^2 \right]\\
		&= \E\left[r \norm{Sx}^2 + r \norm{S h_R}^2 + 2r \left\langle
		Sx, Sh_R \right\rangle \right]\\
		&= \E\left[r \norm{Sx}^2 + r \norm{S h_R}^2 + 2r \left\langle
		S^TSx, h_R \right\rangle \right]\\
		&= \E\left[g(x) + \norm{\sum_{i = 1}^r h_R^{(i)}}^2 +
		\left\langle \grad g(x), h_R \right\rangle \right]\\
		&= g(x) + {1 \over r^2} \sum_{i \neq j} \langle h^{(i)},
		h^{(j)} \rangle + {1 \over r} \sum_{i = 1}^r \norm{h^{(i)}}^2
		 + {1 \over r} \langle \grad g(x), h \rangle\\
		&\leq g(x) + {1 \over r^2} \sum_{i \neq j} {1 \over 2} \left(
		\norm{h^{(i)}}^2 + \norm{h^{(j)}}^2 \right) + {1 \over r}
		\sum_{i = 1}^r \norm{h^{(i)}}^2 + {1 \over r} \langle \grad
		g(x), h \rangle\\
		&\leq g(x) + {2 \over r} \sum_{i = 1}^r \norm{h^{(i)}}^2 + {1
		\over r} \langle \grad g(x), h \rangle\\
		&= g(x) + {2 \over r} \norm{h}^2 + {1 \over r} \langle \grad
		g(x), h \rangle
	\end{align*}
\end{proof}

Lemma~\ref{lem:eso} together with Theorem~{3} in \cite{FR13} give us
the following theorem.

\begin{theorem}[Theorem 3 of {\cite{FR13}}] \label{thm:approx}
	Consider iteration $k$ of the APPROX algorithm (see
	Figure~\ref{fig:approx}). Let $y_k = \theta_k^2 u_{k + 1} + z_{k
	+ 1}$. Let $y^* = \argmin_{y \in E} \norm{y - y_k}$ is the
	optimal solution that is closest to $y_k$. We have
		\[ \E[g(y_k) - g(y^*)] \le {4r^2 \over (k - 1 + 2r)^2}
		\left( \left(1 - {1 \over r} \right) (g(z_0) - g(y^*)) + 2
		\norm{z_0 - y^*}^2 \right) \]
\end{theorem}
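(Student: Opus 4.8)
The plan is to derive the bound as a black-box instantiation of the generic APPROX convergence guarantee, Theorem~3 of \cite{FR13}, to the composite objective underlying (Prox-DSM). First I would cast (Prox-DSM) in the composite form used by \cite{FR13}: the smooth part is $g$ with $g(y) = \norm{\sum_{i=1}^r y^{(i)}}^2$, and the block constraints $y^{(i)} \in B(F_i)$ are absorbed into the block-separable nonsmooth term $\Psi(y) = \sum_{i=1}^r \iota_{B(F_i)}(y^{(i)})$, where $\iota_{B(F_i)}$ is the convex indicator of $B(F_i)$. In the notation of \cite{FR13} the number of coordinate blocks is $r$, the sampling is the set $R$ that includes each block independently with probability $1/r$, so $\E[|R|] = 1$ and the acceleration parameter initializes at $\theta_0 = \E[|R|]/r = 1/r$, exactly as in Figure~\ref{fig:approx}. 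The iterate whose error Theorem~3 of \cite{FR13} controls is $y_k = \theta_k^2 u_{k+1} + z_{k+1}$, and $y^* \in E$ is a minimizer of (Prox-DSM); both match the quantities in the statement.

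The single hypothesis required by Theorem~3 of \cite{FR13} is the Expected Separable Overapproximation (Assumption~1 in \cite{FR13}), and this is precisely the content of Lemma~\ref{lem:eso}. Comparing the conclusion of that lemma, $\E[g(x + h_R)] \le g(x) + \frac{1}{r}\langle \grad g(x), h\rangle + \frac{2}{r}\norm{h}^2$, with the ESO template $\E[g(x + h_R)] \le g(x) + \frac{\E[|R|]}{r}\big(\langle \grad g(x), h\rangle + \frac{1}{2}\norm{h}_v^2\big)$, and using $\E[|R|] = 1$, I would read off that $g$ satisfies the ESO with uniform parameters $v_i = 4$ for every block $i$, since then $\frac{1}{2r}\norm{h}_v^2 = \frac{2}{r}\norm{h}^2$. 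This is the only place the structure of $g$ enters; once the ESO parameters are fixed the remaining argument is generic.

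With $(g, \Psi)$, the sampling $R$, and the parameters $v_i = 4$ in hand, the claimed inequality follows by substituting into the generic bound of Theorem~3 of \cite{FR13}. The offset $2r/\E[|R|] = 2r$ produces the denominator $(k - 1 + 2r)^2$; the leading factor $4r^2/\E[|R|]^2 = 4r^2$ produces the prefactor; the weight $1 - \E[|R|]/r = 1 - 1/r$ multiplies the initial optimality gap $g(z_0) - g(y^*)$; and the $v$-weighted initial distance $\norm{z_0 - y^*}_v^2 = 4\norm{z_0 - y^*}^2$, carried by the $\frac{1}{2}$ coefficient of the distance term in \cite{FR13}, yields $2\norm{z_0 - y^*}^2$. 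Collecting these reproduces the stated bound.

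The main thing to verify, and the only point where care is needed, is that the update rule displayed in Figure~\ref{fig:approx} is literally the APPROX step of \cite{FR13} specialized to this $(g, \Psi, R, v)$: one must check that the block subproblem $\min_{t + z_k^{(i)} \in B(F_i)}\big(\langle \grad_i g(\theta_k^2 u_k + z_k), t\rangle + 2r\theta_k\norm{t}^2\big)$ is exactly the proximal coordinate step of \cite{FR13} with the stepsize determined by $v_i = 4$ and $\theta_k$, and likewise that the $u$- and $\theta$-recursions coincide. This matching, together with confirming that the constants above are equalities rather than merely order-of-magnitude bounds, is mechanical; the entire substance of the theorem is therefore carried by the ESO verification in Lemma~\ref{lem:eso}, after which Theorem~3 of \cite{FR13} applies verbatim.
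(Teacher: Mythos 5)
Your proposal is correct and takes essentially the same route as the paper: the paper's proof consists exactly of observing that Lemma~\ref{lem:eso} verifies Assumption~1 of \cite{FR13} with $\tau = 1$ and $\nu_i = 4$ for every block, and then invoking Theorem~3 of \cite{FR13} as a black box. Your write-up merely makes explicit the constant-matching (the $(k-1+2r)^2$ denominator, the $1 - 1/r$ weight, and $\norm{z_0 - y^*}_\nu^2 = 4\norm{z_0 - y^*}^2$ giving the factor $2$) that the paper leaves implicit.
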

\begin{proof}
	It follows from Lemma~\ref{lem:eso} that the objective function
	$g$ of (Prox-DSM) and the random blocks $R_k$ used by the APPROX
	algorithm satisfy Assumption~{1} in \cite{FR13} with $\tau = 1$
	and $\nu_i = 4$ for each $i \in \set{1, 2, \dots, r}$. Thus we
	can apply Theorem~{3} in \cite{FR13}.
\end{proof}

Consider an epoch $\ell$. Let $y_{\ell + 1}$ be the solution
constructed by the APPROX algorithm after $4nr^{3/2} + 1$ iterations,
starting with $y_{\ell}$. Let $y^* = \argmin_{y \in E}
\norm{y - y_{\ell + 1}}$ be the optimal solution that is closest to
$y_{\ell + 1}$. Let $\xi_{\ell}$ denote the random choices made
during epoch $\ell$. By Theorem~\ref{thm:approx},
\begin{align*}
	\underset{\xi_{\ell}}{\E}[g(y_{\ell + 1}) - g(y^*)]
	&\le {4r^2 \over (4nr^{3/2} + 2r)^2} \left( \left(1 - {1 \over r}
	\right) (g(y_{\ell}) - g(y^*)) + 2 \norm{y_{\ell} - y^*}^2
	\right)\\
	&\le {1 \over (2nr^{1/2} + 1)^2} \left(g(y_{\ell}) - g(y^*) +
	2\norm{y_{\ell} - y^*}^2 \right)
\end{align*}
We also have
\begin{align*}
	g(y_{\ell}) &= g(y^*) + \langle \grad g(y^*), y_{\ell} -
	y^*\rangle + \int_0^1 \langle \grad g(y^* + t(y_{\ell} - y^*)) -
	\grad g(y^*), y_{\ell} - y^* \rangle dt\\
	&\ge g(y^*) + \int_0^1 \langle \grad g(y^* + t(y_{\ell} - y^*)) - \grad
	g(y^*), y_{\ell} - y^* \rangle dt\\
	&= g(y^*) + \int_0^1 2 tr \norm{S(y_{\ell} - y^*)}^2 dt\\
	&= g(y^*) + r \norm{S(y_{\ell} - y^*)}^2\\
	&\ge g(y^*) + {1 \over n^2 r} \norm{y_{\ell} - y^*}^2 \qquad
	\mbox{(By Theorem~\ref{thm:projection})}
\end{align*}
In the second line, we have used the first-order optimality condition
for $y^*$ (Lemma~\ref{lem:first-order-optimality}). In the last line,
we have used Theorem~\ref{thm:projection}.

Therefore
	\[ \norm{y_{\ell} - y^*}^2 \leq n^2 r (g(y_{\ell}) - g(y^*)) \]
and hence
\begin{align*}
	\underset{\xi_{\ell}}{\E}[g(y_{\ell + 1}) - g(y^*)] &\le {2n^2 r
	+ 1 \over (2nr^{1/2} + 1)^2} \big(g(y_{\ell}) - g(y^*) \big)\\
	&\le {1 \over 2} \big( g(y_{\ell}) - g(y^*) \big)
\end{align*}
Let $\xi = (\xi_0, \dots, \xi_{\ell})$ be the random choices made
during the epochs $0$ to $\ell$. We have
	\[\underset{\xi}{\E}[g(y_{\ell + 1}) - g(y^*)] \le {1 \over
	2^{\ell + 1}} \big( g(y_0) - g(y^*) \big) \]
This completes the proof of Theorem~\ref{thm:acdm-convergence} and
the convergence analysis for the ACDM algorithm.

\section{Experiments}
\label{sec:experiments}

\begin{figure*}[t]
\begin{center}
\subfigure[Penguin]{
\begin{minipage}[t]{0.32\textwidth}
	\centering
	\includegraphics[scale=0.295]{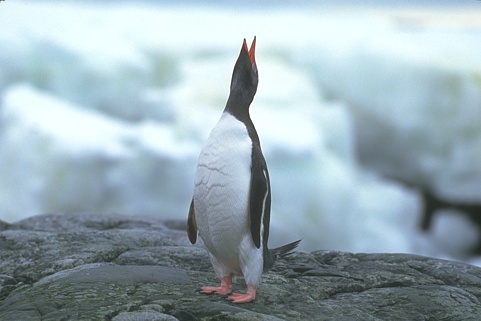}
\end{minipage}
}
\end{center}

\subcapcentertrue
\subcapcenterlasttrue
\subcapnoonelinetrue
\subcapraggedrighttrue

\subfigure[][ACDM~{1} \mbox{$\nu_s = 1.28 \cdot
10^7$} \mbox{$\nu_d = 1.3 \cdot 10^5$}]{
\begin{minipage}[t]{0.16\textwidth}
	\centering
	\includegraphics[scale=0.25]{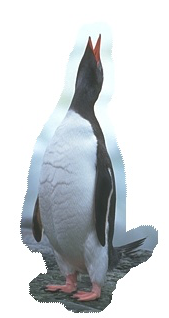}
\end{minipage}
}
\subfigure[][ACDM~{20} \hbox{$\nu_s = 8.38 \cdot 10^6$}
\hbox{$\nu_d = 8.14 \cdot 10^4$}]{
\begin{minipage}[t]{0.16\textwidth}
	\centering
	\includegraphics[scale=0.26]{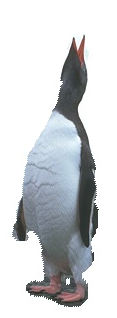}
\end{minipage}
}
\subfigure[][ACDM~{100} \mbox{$\nu_s = 2.9 \cdot 10^6$}
\mbox{$\nu_d = 1.5 \cdot 10^4$}]{
\begin{minipage}[t]{0.16\textwidth}
	\centering
	\includegraphics[scale=0.26]{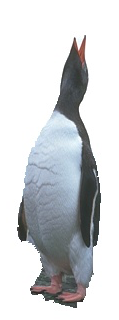}
\end{minipage}
}
\subfigure[][AP~{1} \mbox{$\nu_s = 9.98 \cdot 10^6$} \mbox{$\nu_d
= 1.06 \cdot 10^5$}]{
\begin{minipage}[t]{0.16\textwidth}
	\centering
	\includegraphics[scale=0.25]{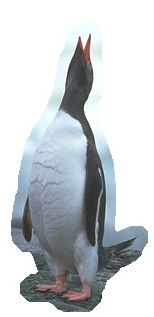}
\end{minipage}
}
\subfigure[][AP~{20} \mbox{$\nu_s = 8.96 \cdot 10^6$} \mbox{$\nu_d
= 1.05 \cdot 10^5$}]{
\begin{minipage}[t]{0.16\textwidth}
	\centering
	\includegraphics[scale=0.25]{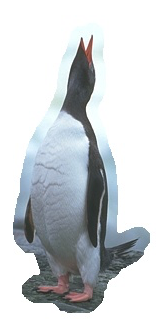}
\end{minipage}
}
\subfigure[][AP~{100} \mbox{$\nu_s = 7.64 \cdot 10^6$}
\mbox{$\nu_d = 8.3 \cdot 10^4$}]{
\begin{minipage}[t]{0.11\textwidth}
	\centering
	\includegraphics[scale=0.25]{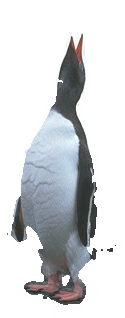}
\end{minipage}
}
\caption{Penguin segmentation results for the fastest (ACDM) and
slowest (AP) algorithms, after 1, 20, and 100 projections. The
$\nu_s$ and $\nu_d$ values are the smooth and discrete dual gaps.}
\label{fig:penguin-segmentations}
\end{figure*}

\begin{figure*}[p]
\subfigure[][Smooth gaps - Octopus] {
\begin{minipage}[t]{0.32\textwidth}
	\centering
	\includegraphics[scale=0.295]{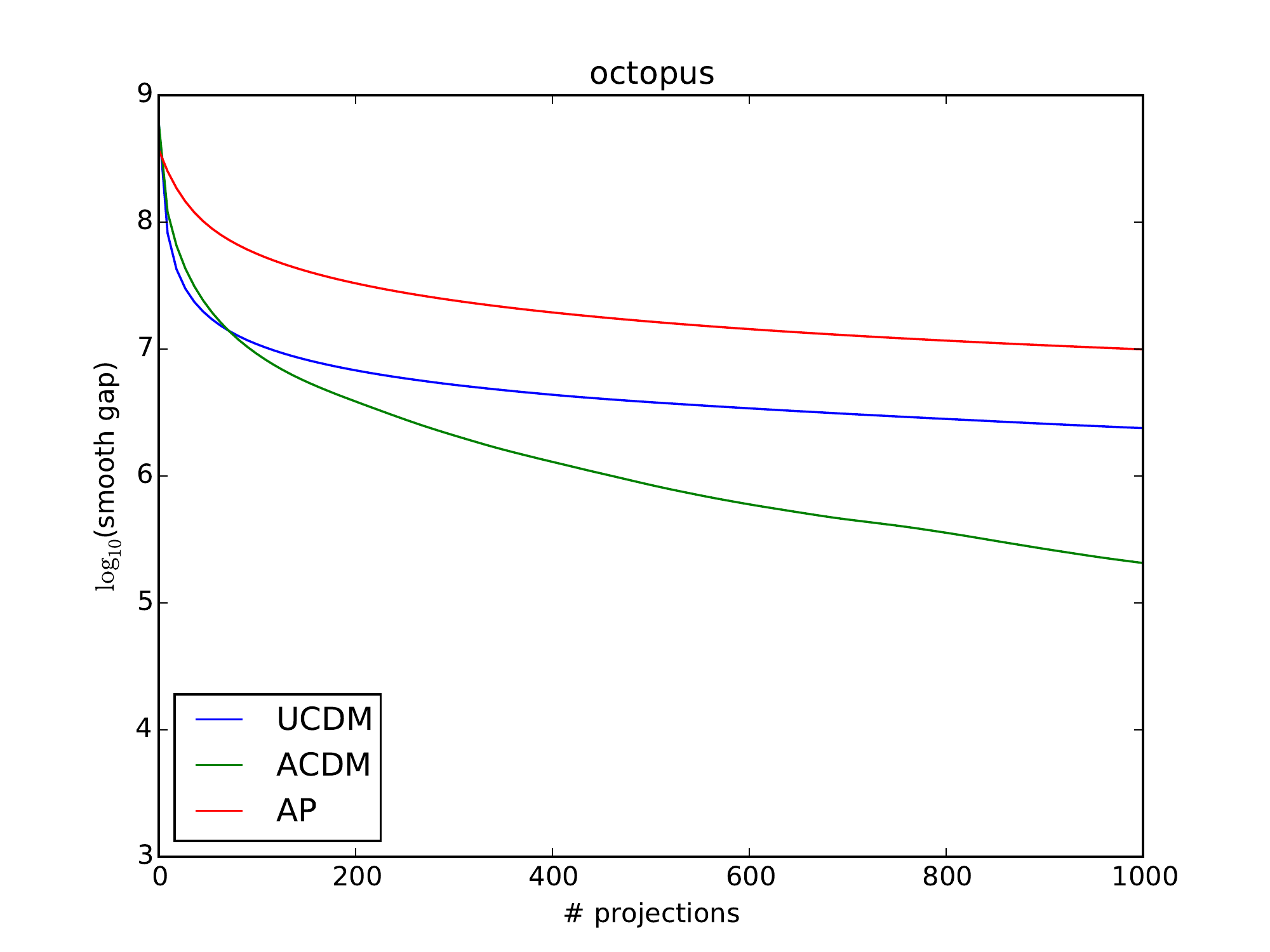}
\end{minipage}
}
\subfigure[][Smooth gaps - Penguin] {
\begin{minipage}[t]{0.32\textwidth}
	\centering
	\includegraphics[scale=0.295]{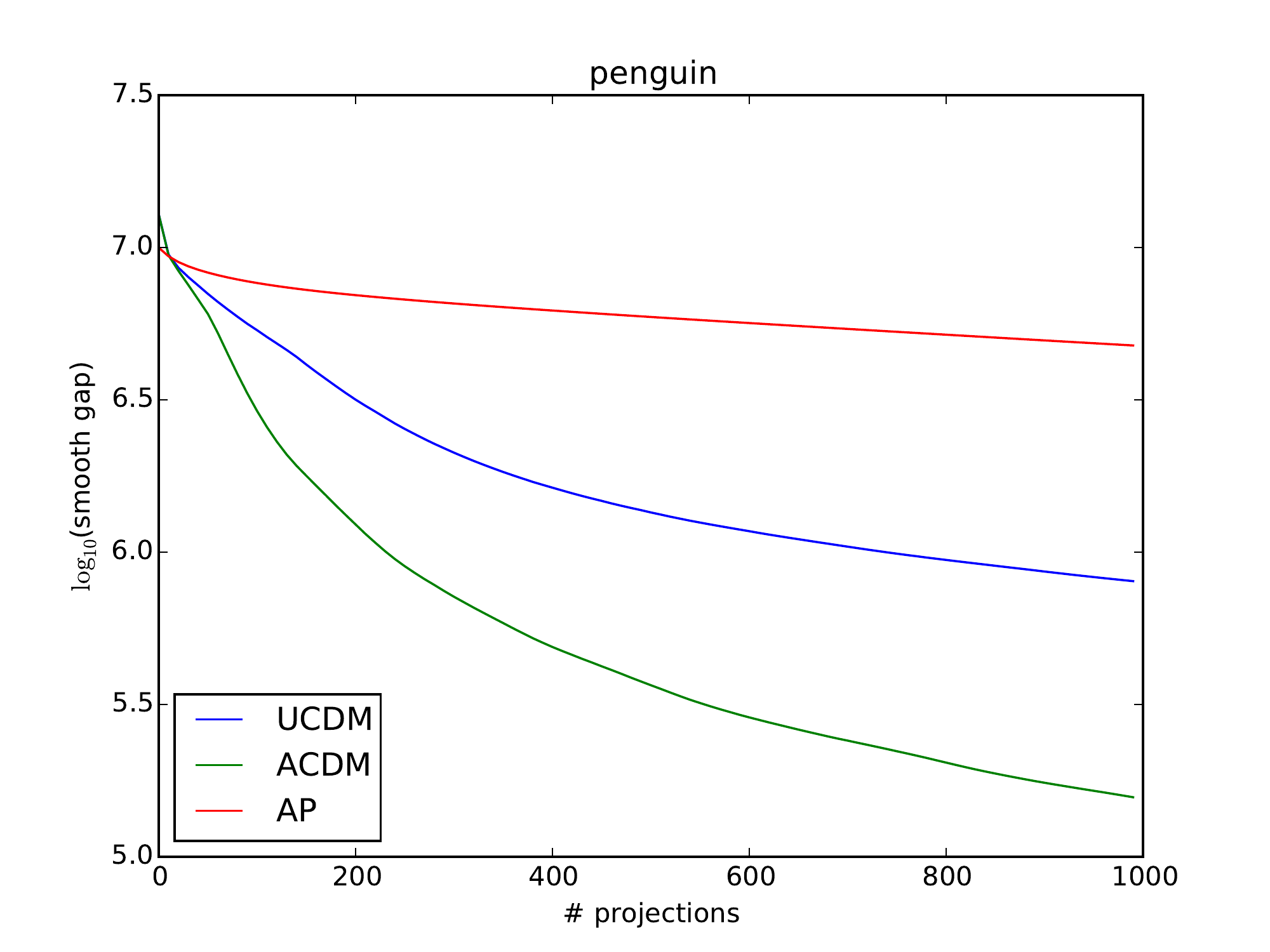}
\end{minipage}
}
\subfigure[][Smooth gaps - Plane] {
\begin{minipage}[t]{0.32\textwidth}
	\centering
	\includegraphics[scale=0.295]{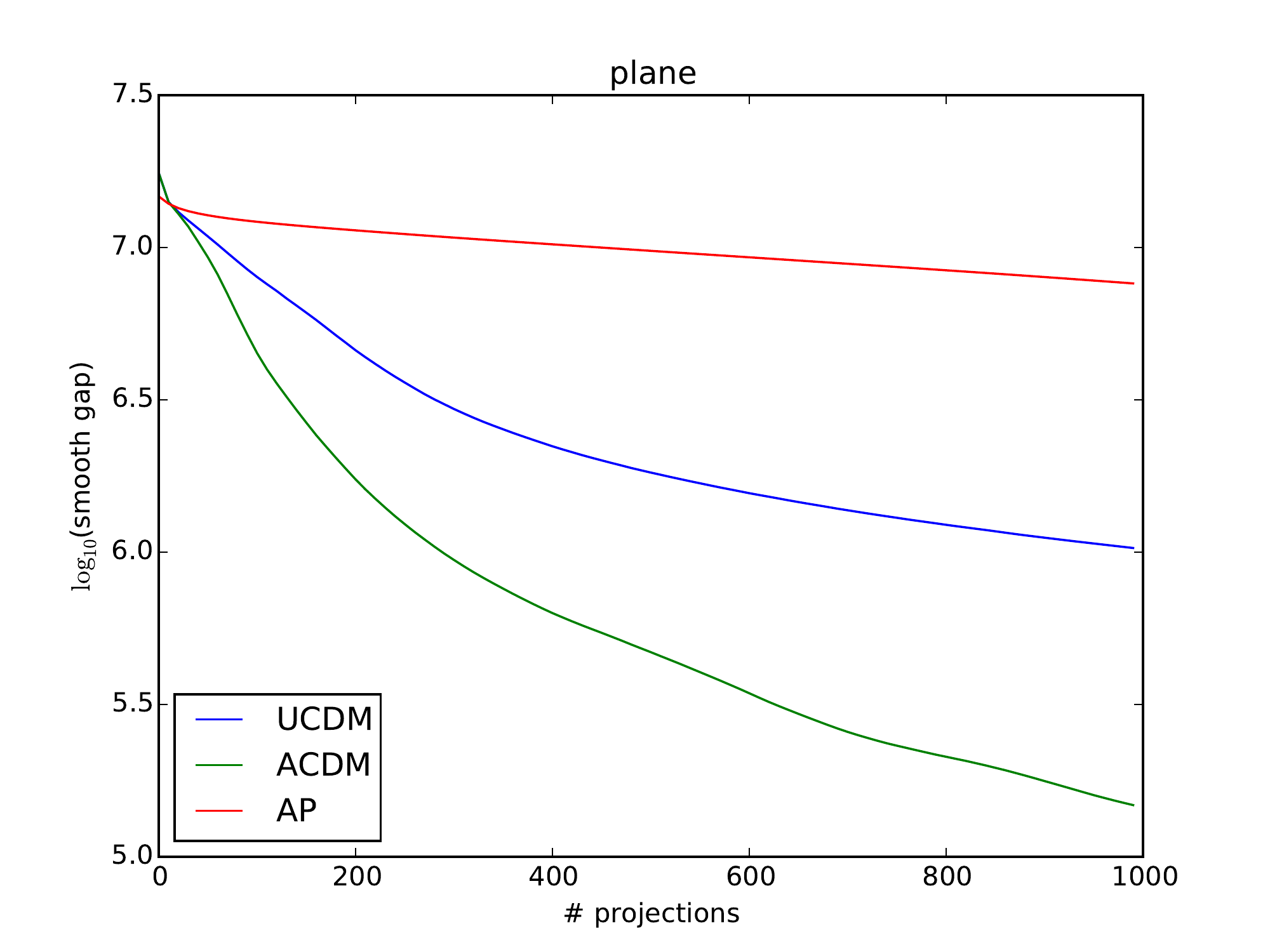}
\end{minipage}
}
\subfigure[][Smooth gaps - Small plant] {
\begin{minipage}[t]{0.32\textwidth}
	\centering
	\includegraphics[scale=0.295]{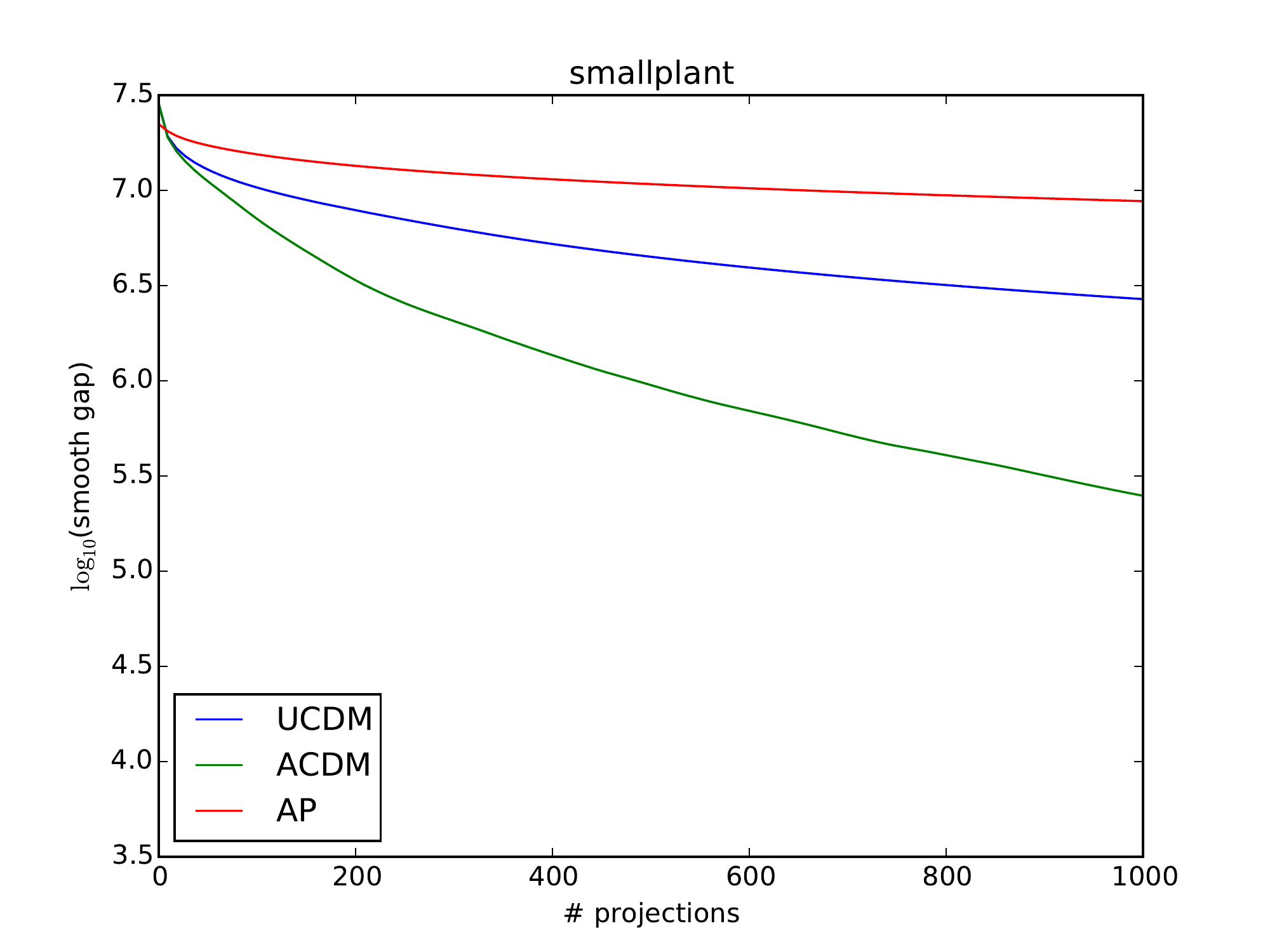}
\end{minipage}
}
\subfigure[][Discrete gaps - Octopus] {
\begin{minipage}[t]{0.32\textwidth}
	\centering
	\includegraphics[scale=0.295]{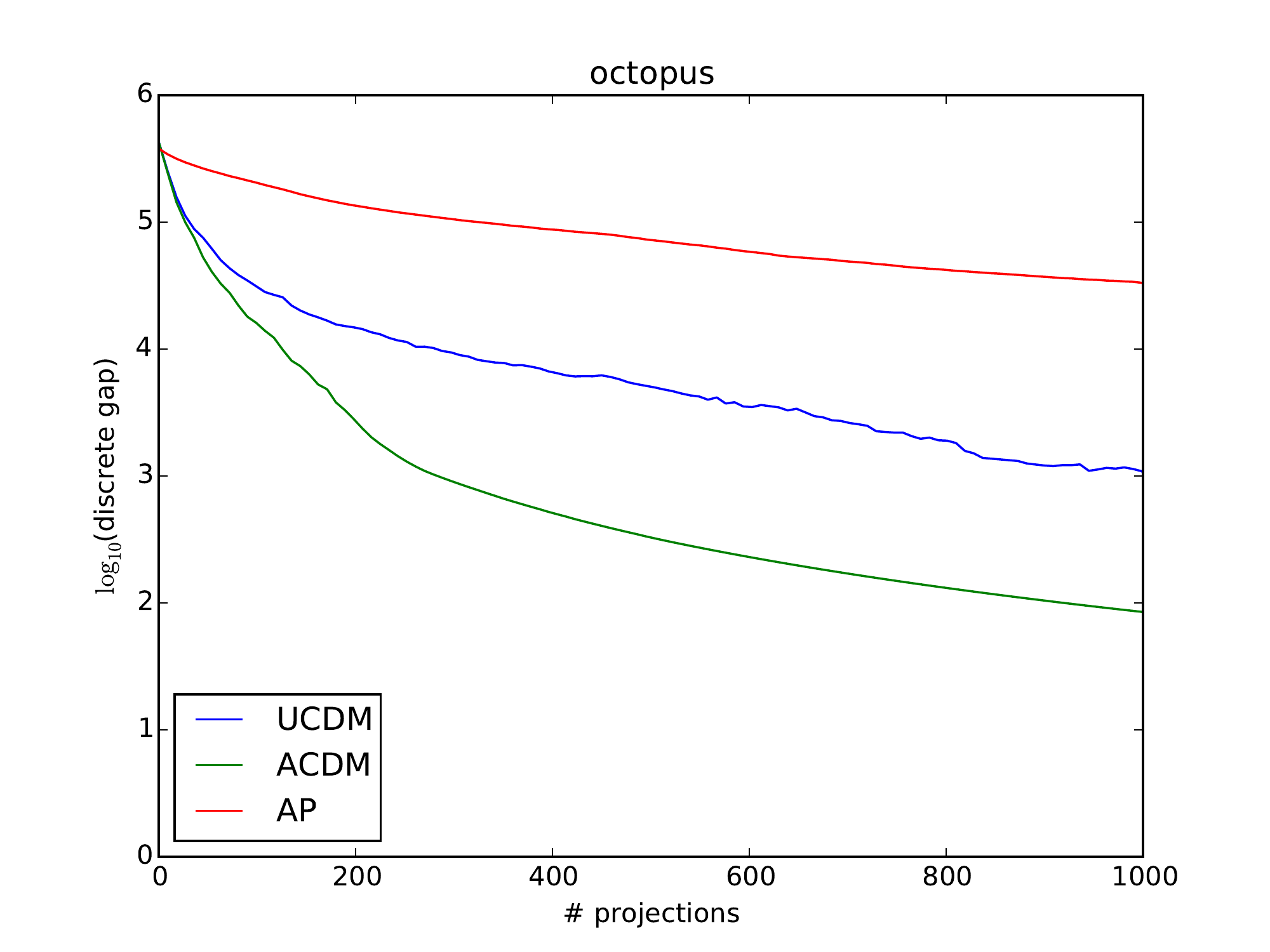}
\end{minipage}
}
\subfigure[][Discrete gaps - Penguin] {
\begin{minipage}[t]{0.32\textwidth}
	\centering
	\includegraphics[scale=0.295]{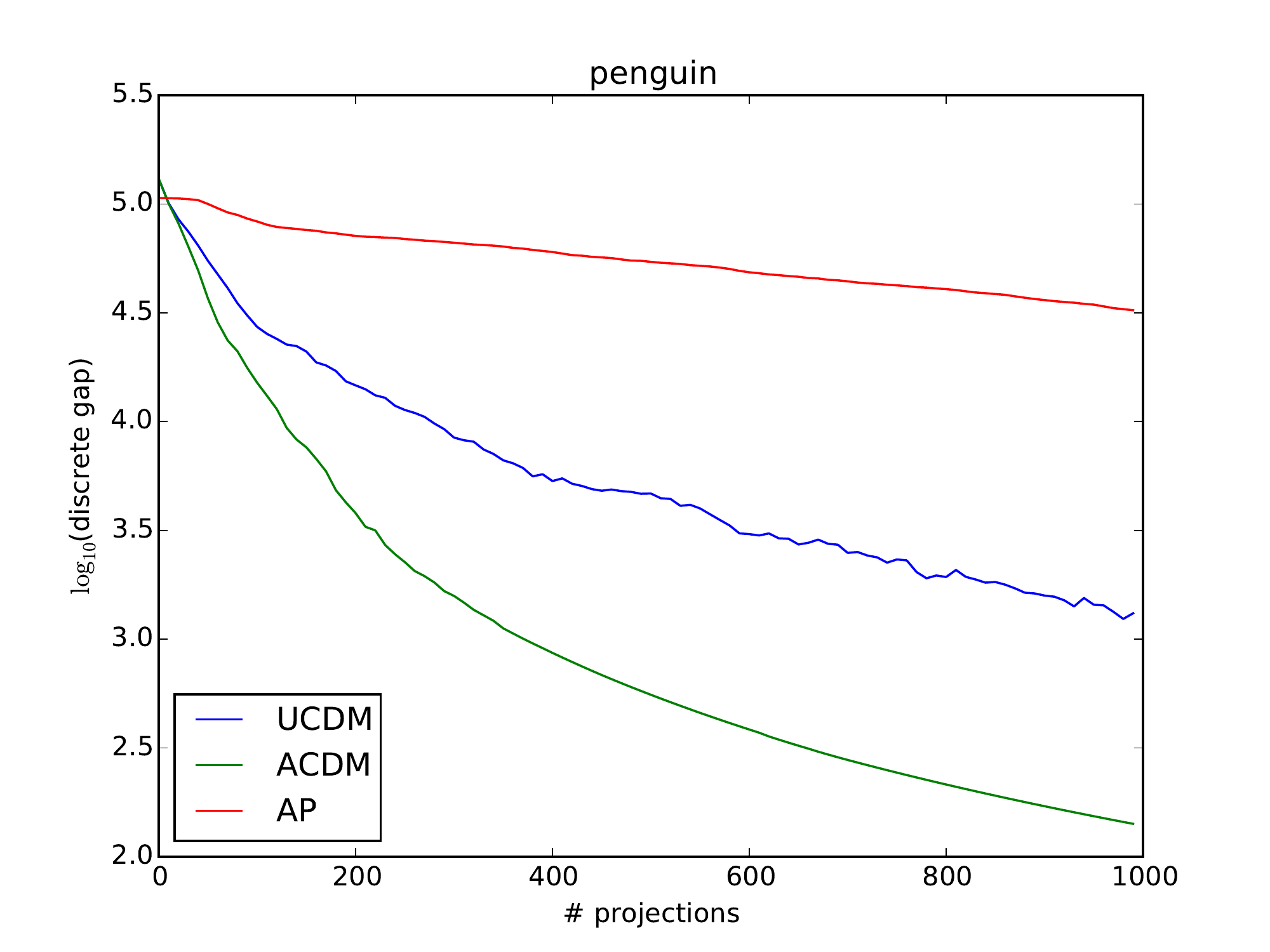}
\end{minipage}
}
\subfigure[][Discrete gaps - Plane] {
\begin{minipage}[t]{0.32\textwidth}
	\centering
	\includegraphics[scale=0.295]{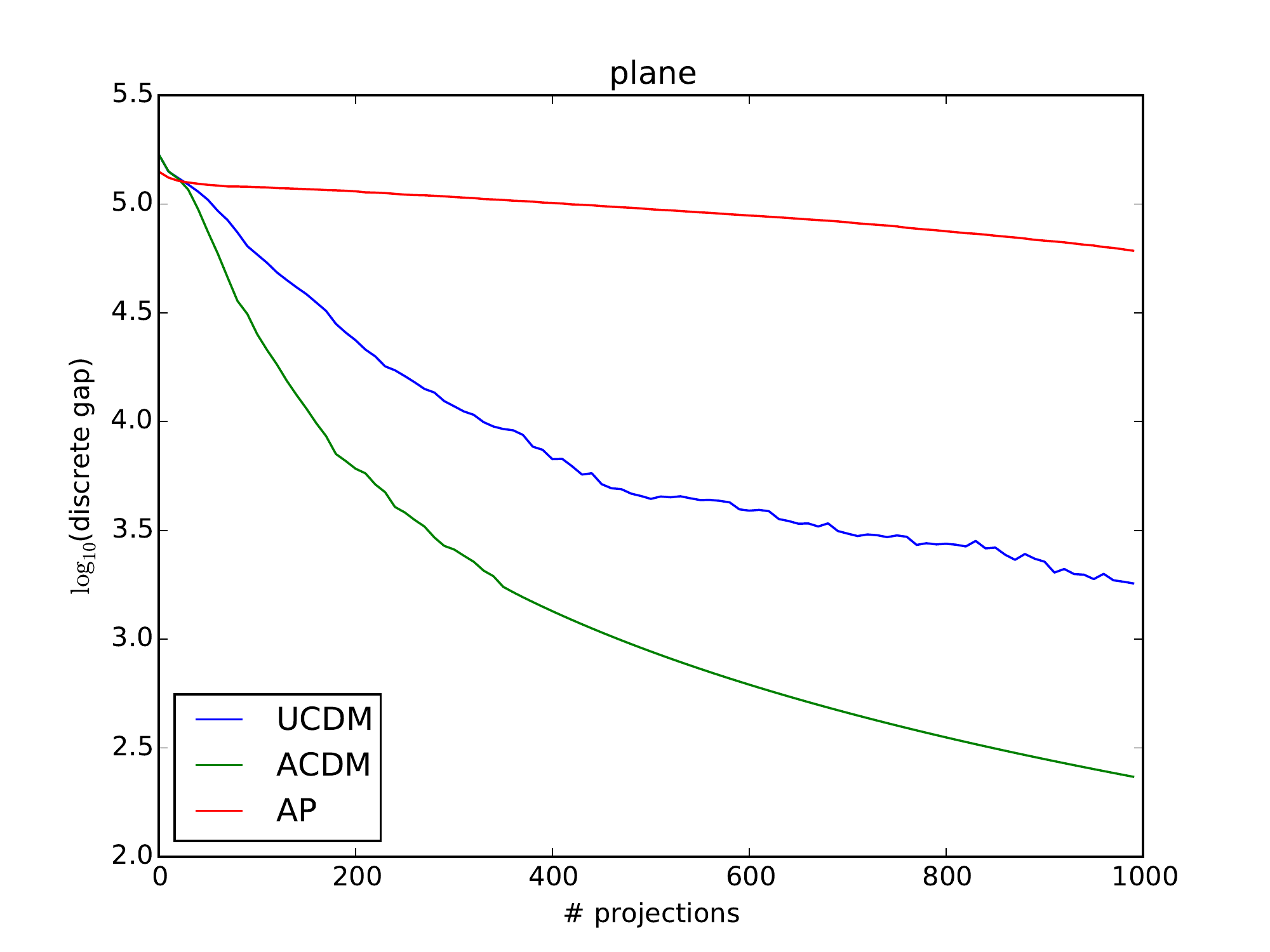}
\end{minipage}
}
\subfigure[][Discrete gaps - Small plant] {
\begin{minipage}[t]{0.32\textwidth}
	\centering
	\includegraphics[scale=0.295]{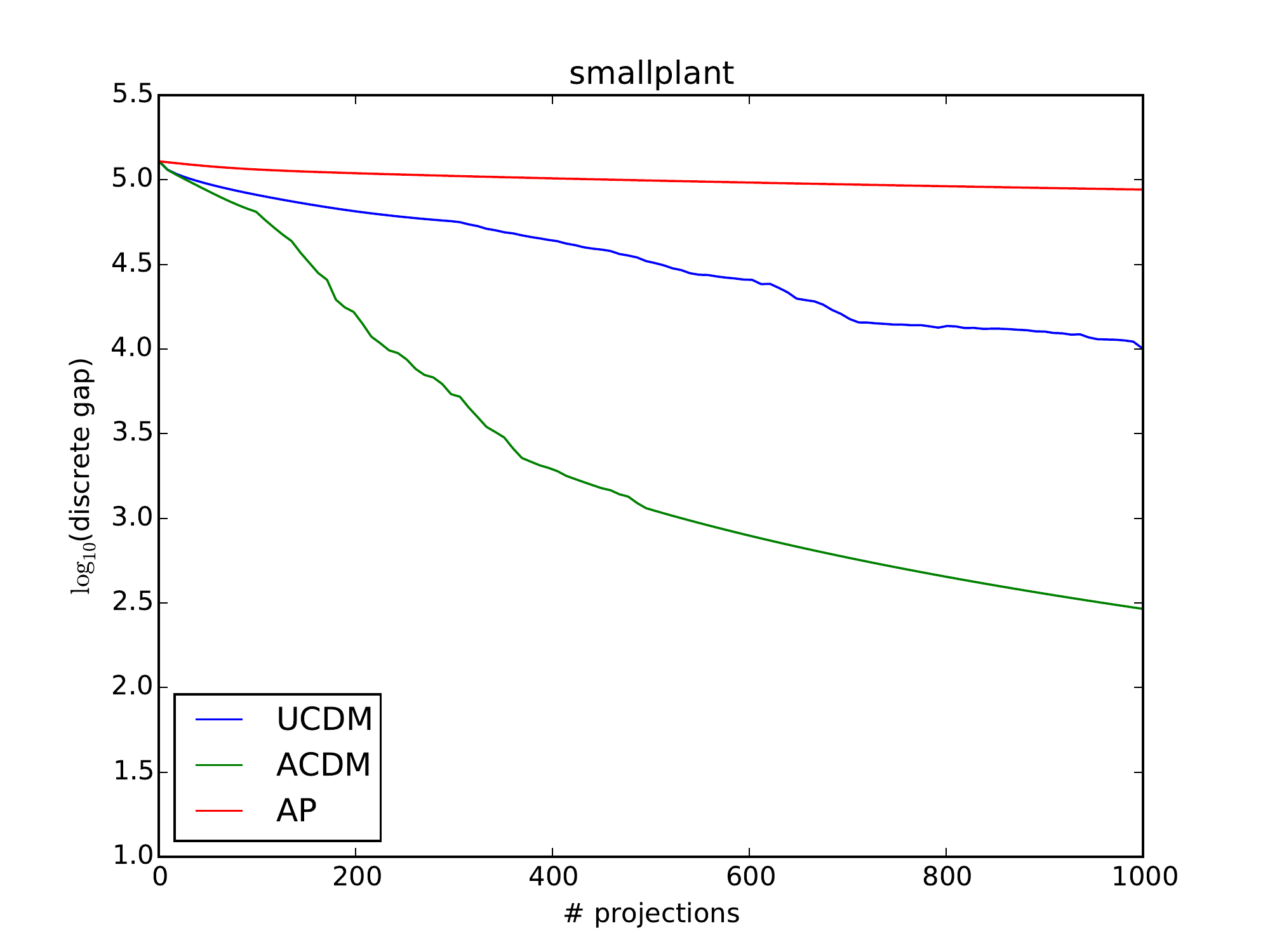}
\end{minipage}
}

\caption{Comparison of the convergence of the three algorithms (UCDM,
ACDM, AP) on four image segmentation instances.}
\label{fig:duality-gaps}
\end{figure*}

\textbf{Algorithms.} We empirically evaluate and compare the
following algorithms: the RCDM described in Section~\ref{sec:rcdm},
the ACDM described in Section~\ref{sec:acdm}, and the alternating
projections (AP) algorithm of \cite{NishiharaJJ14}. The AP algorithm
solves the following best approximation problem that is equivalent to
(Prox-DSM):
\begin{equation} \label{eq:best-approx}
	\min_{a \in \sA, y \in \sY} \norm{a - y}^2
	\tag{Best-Approx}
\end{equation}
where $\script{A} = \set{(a^{(1)}, a^{(2)}, \dots, a^{(r)}) \in
\R^{nr} \colon \sum_{i = 1}^r a^{(i)} = 0}$ and $\script{Y} =
\bigotimes_{i = 1}^r B(F_i)$.

The AP algorithm starts with a point $a_0 \in \sA$ and it iteratively
constructs a sequence $\set{(a_k, y_k)}_{k \geq 0}$ by projecting
onto $\sA$ and $\sY$: $y_k
= \Proj_{\sY}(a_k)$, $a_{k + 1} = \Proj_{\sA}(y_k)$.

$\Proj_{K}(\cdotp)$ is the projection operator onto $K$, that is,
$\Proj_{K}(x) = \argmin_{z \in K} \norm{x - z}$.
Since $\sA$ is a subspace, it is straightforward to project onto
$\sA$. The projection onto $\sY$ can be implemented using the oracles
for the projections $\Proj_{B(F_i)}$ onto the base polytopes of the
functions $F_i$.

For all three algorithms, the iteration cost is dominated by the cost
of projecting onto the base polytopes $B(F_i)$. Therefore the total
number of such projections is a suitable measure for comparing the
algorithms. In each iteration, the RCDM algorithm performs a single
projection for a random block $i$ and the ACDM algorithm performs a
single projection in expectation. The AP algorithm performs $r$
projections in each iteration, one for each block.

{\bf Image Segmentation Experiments.}
We evaluate the algorithms on graph cut problems that arise in image
segmentation or MAP inference tasks in Markov Random Fields. Our
experimental setup is similar to that of \cite{JegelkaBS13}. We set
up the image segmentation problems on a $8$-neighbor grid graph with
unary potentials derived from Gaussian Mixture Models of color
features \cite{RotherKB04}. The weight of a graph edge $(i, j)$
between pixels $i$ and $j$ is a function of $\exp(- \norm{v_i -
v_j}^2)$, where $v_i$ is the RGB color vector of pixel $i$. The
optimization problem that we solve for each segmentation task is a
cut problem on the grid graph.

\emph{Function decomposition:}
We partition the edges of the grid into a small number of
\emph{matchings} and we decompose the function using the cut
functions of these matchings. Note that it is straightforward to
project onto the base polytopes of such functions using a sequence of
projections onto line segments.

\emph{Duality gaps:}
We evaluate the convergence behaviours of the algorithms using the
following measures. Let $y$ be a feasible solution to the dual of the
proximal problem (\ref{eq:proximal}). The solution $x = - \sum_{i =
1}^r y^{(i)}$ is a feasible solution for the proximal problem. We
define the \emph{smooth duality gap} to be the difference between the
objective values of the primal solution $x$ and the dual solution
$y$: $\nu_s = \left(f(x) + {1 \over 2} \norm{x}^2 \right) - \left(-
{r \over 2} \norm{Sy}^2 \right)$.
Additionally, we compute a discrete duality gap for the discrete
problem (DSM) and the dual of its \lovasz relaxation; the latter is
the problem $\max_{z \in B(F)} (z)_{-}(V)$, where $(z)_{-} =
\min\set{z, 0}$ applied elementwise \cite{JegelkaBS13}.  The best
level set $S_x$ of the proximal solution $x = - \sum_{i = 1}^r
y^{(i)}$ is a solution to the discrete problem (DSM).  The solution
$z = - x = \sum_{i = 1}^r y^{(i)}$ is a feasible solution for the
dual of the \lovasz relaxation. We define the \emph{discrete duality
gap} to be the difference between the objective values of these
solutions: $\nu_d(x) = F(S_x) - (-x)_{-}(V)$.

We evaluated the algorithms on four image segmentation
instances\footnote{The data is available at
\url{http://melodi.ee.washington.edu/~jegelka/cc/index.html} and
\url{http://research.microsoft.com/en-us/um/cambridge/projects/visionimagevideoediting/segmentation/grabcut.htm}}
\cite{JegelkaB11,RotherKB04}. Figure~\ref{fig:duality-gaps} shows the
smooth and discrete duality gaps on the four instances.
Figure~\ref{fig:penguin-segmentations} shows some segmentation
results for one of the instances.

%\section{Conclusion}

\medskip
\textbf{Acknowledgements.} We thank Stefanie Jegelka for providing us
with some of the data used in our experiments.

%\newpage
\clearpage
\bibliographystyle{plain}
\bibliography{submod}

\newpage
\appendix

\section{Proof of Lemma~\ref{lem:dual-prox}}
\label{app:lem-dual-prox}

By the definition of the \lovasz extension, for each $i \in [r]$, we have
	\[ f_i(x) = \max_{y^{(i)} \in B(F_i)} \langle y^{(i)},
	x\rangle.\]
Therefore
\begin{align*}
	&\min_{x \in \R^n} \sum_{i = 1}^r \left( f_i(x) + {1 \over 2r}
	\norm{x}^2 \right) \\&= \min_{x \in \R^n} \sum_{i = 1}^r \left(
	\max_{y^{(i)} \in B(F_i)} \langle y^{(i)}, x \rangle + {1 \over
	2r} \norm{x}^2 \right)\\
	&= \min_{x \in \R^n} \max_{y^{(1)} \in B(F_1), \dots, y^{(r)} \in
	B(F_r)} \sum_{i = 1}^r \left(\langle y^{(i)}, x \rangle + {1
	\over 2r} \norm{x}^2 \right)\\
	&= \max_{y^{(1)} \in B(F_1), \dots, y^{(r)} \in B(F_r)} \min_{x
	\in \R^n} \sum_{i = 1}^r \left( \langle y^{(i)}, x \rangle + {1
	\over 2r} \norm{x}^2 \right)\\
	&= \max_{y^{(1)} \in B(F_1), \dots, y^{(r)} \in B(F_r)} - {1
	\over 2} \norm{\sum_{i = 1}^r y^{(i)}}^2
\end{align*}
On the third line, we have used the fact that the function $\langle
y, x \rangle + (1/2r) \norm{x}^2$ is convex in $x$ and linear in $y$,
which allows us to exchange the $\min$ and the $\max$ (see for
example Corollary~{37.3.2} in Rockafellar \cite{Rockafellar70}). On
the fourth line, we have used the fact that the minimum is achieved
at $x = - \sum_{i = 1}^r y^{(i)}$.

\section{Proofs omitted from Section~\ref{sec:rcdm}}
\label{app:rcdm}

If $x \in \R^{nr}$ and $\sX$ is a subspace of $\R^{nr}$, we let
$\Proj_{\sX}(x)$ denote the projection of $x$ on $\sX$, that is,
$\Proj_{\sX}(x) = \argmin_{z \in \R^{nr}} \norm{x - z}$. We let
$\sX^{\perp}$ denote the orthogonal complement of the subspace $\sX$.

\begin{proposition} \label{prop:project-onto-Q}
	For any point $x \in \R^{nr}$, $\Proj_{\sQ^{\perp}}(x) = S^T S x$
	and thus $\Proj_{\sQ}(x) = x - S^T S x$.
\end{proposition}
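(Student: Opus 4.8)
The plan is to verify the formula directly from the orthogonal decomposition $\R^{nr} = \sQ \oplus \sQ^{\perp}$, without appealing to any variational characterization of the projection. Since $\sQ = \set{y \in \R^{nr} \colon Sy = 0} = \ker S$, its orthogonal complement is the row space of $S$; by the standard linear-algebra identity $(\ker S)^{\perp} = \operatorname{range}(S^T)$, we have $\sQ^{\perp} = \operatorname{range}(S^T)$. It therefore suffices to exhibit, for a given $x$, a decomposition $x = u + v$ with $u \in \sQ^{\perp}$ and $v \in \sQ$: by uniqueness of the orthogonal decomposition, $u$ is then $\Proj_{\sQ^{\perp}}(x)$ and $v$ is $\Proj_{\sQ}(x)$.

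The single computational fact driving the proof is that $S$ has orthonormal rows. Writing $S = \frac{1}{\sqrt{r}}[I_n \cdots I_n]$, I would compute $S S^T = \frac{1}{r}\sum_{i = 1}^r I_n = I_n$. This is where the normalizing factor $1/\sqrt{r}$ in the definition of $S$ pays off, and everything else follows mechanically.

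I would then take $u = S^T S x$ and $v = x - S^T S x$. For the first membership, $u = S^T(Sx) \in \operatorname{range}(S^T) = \sQ^{\perp}$ is immediate. For the second, $S v = S x - (S S^T)(S x) = S x - S x = 0$ by the identity $S S^T = I_n$, so $v \in \ker S = \sQ$. Since $x = u + v$ with $u \in \sQ^{\perp}$ and $v \in \sQ$, uniqueness of the orthogonal decomposition yields $\Proj_{\sQ^{\perp}}(x) = S^T S x$ and hence $\Proj_{\sQ}(x) = x - S^T S x$, as claimed.

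There is no genuine obstacle here: the only substantive step is the identity $S S^T = I_n$, after which both membership checks are one line each. One could instead invoke the textbook formula that the orthogonal projection onto $\operatorname{range}(S^T)$ equals $S^T (S S^T)^{-1} S$, which collapses to $S^T S$ precisely because $S S^T = I_n$; the direct verification above has the advantage of being self-contained and avoiding even this appeal.
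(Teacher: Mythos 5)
Your proof is correct and takes essentially the same approach as the paper's: both identify $\sQ^{\perp}$ as the row space of $S$ (equivalently $\operatorname{range}(S^T)$) and exploit the orthonormality of the rows of $S$, i.e., $SS^T = I_n$. The only cosmetic difference is that the paper invokes the orthonormal-basis expansion $\Proj_{\sQ^{\perp}}(x) = \sum_{i} \langle x, v_i \rangle v_i = S^T S x$, whereas you verify the decomposition $x = S^T S x + (x - S^T S x)$ directly and appeal to uniqueness of the orthogonal decomposition.
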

\begin{proof}
	Since $\sQ$ is the null space of $S$, $\sQ^{\perp}$ is the row
	space of $S$. Since the rows of $S$ are orthonormal, they form a
	basis for $\sQ^{\perp}$. Therefore, if we let $v_1, \dots, v_n$
	denote the rows of $S$, we have
		\[ \Proj_{\sQ^{\perp}}(x) = \sum_{i = 1}^n \langle x, v_i
		\rangle v_i = S^T S x.\]
\end{proof}

\begin{proposition} \label{prop:optimal-E}
	The set of all optimal solutions to (Prox-DSM) is equal to $E$.
\end{proposition}
\begin{proof}
	We have
	\begin{align*}
		d(\sP, \sQ) &= \min_{y \in \sP} \norm{y - \Proj_{\sQ}(y)}\\
		&= \min_{y \in \sP} \norm{S^T S y} \qquad \Comment{By
		Proposition~\ref{prop:project-onto-Q}}\\
		&= \min_{y \in \sP} \norm{S y}
	\end{align*}
	Since (Prox-DSM) is the problem $\min_{y \in \sP} r \norm{Sy}^2$,
	$E$ is the set of all optimal solutions to (Prox-DSM).
\end{proof}

\begin{proposition} \label{prop:distance-to-Q'}
	Let $y \in \R^{nr}$ and let $p \in E$. We have $d(y, \sQ') =
	\norm{S(y - p)}$.
\end{proposition}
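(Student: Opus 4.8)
The plan is to reduce this distance computation to an application of Proposition~\ref{prop:project-onto-Q} together with the orthonormality of the rows of $S$. First I would observe that $\sQ'$ is an affine subspace whose direction subspace is $\sQ$: indeed $\sQ' = \sQ - v$ is merely a translate of the linear subspace $\sQ$. For any affine subspace with direction subspace $\sQ$, the distance from a point $y$ equals the norm of the component of $y - q_0$ orthogonal to $\sQ$, for any fixed reference point $q_0 \in \sQ'$; that is, $d(y, \sQ') = \norm{\Proj_{\sQ^{\perp}}(y - q_0)}$, since the nearest point of $\sQ'$ to $y$ is $q_0 + \Proj_{\sQ}(y - q_0)$.

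The key observation is that we may take $q_0 = p$. By construction, $\sQ'$ is the translate of $\sQ$ that intersects $\sP$ exactly in $E$, so $E \subseteq \sQ'$; in particular $p \in E \subseteq \sQ'$. Hence $d(y, \sQ') = \norm{\Proj_{\sQ^{\perp}}(y - p)}$. Applying Proposition~\ref{prop:project-onto-Q}, which states $\Proj_{\sQ^{\perp}}(x) = S^T S x$, I would rewrite this as $d(y, \sQ') = \norm{S^T S(y - p)}$.

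Finally, to match the desired expression $\norm{S(y - p)}$, I would exploit the fact that the rows of $S$ are orthonormal, equivalently $S S^T = I_n$ (which holds since $S S^T = \frac{1}{r}\sum_{i=1}^r I_n = I_n$). For any $w \in \R^{nr}$ this yields
\[
	\norm{S^T S w}^2 = w^T S^T S S^T S w = w^T S^T S w = \norm{S w}^2,
\]
so that $\norm{S^T S w} = \norm{S w}$. Applying this identity with $w = y - p$ gives $d(y, \sQ') = \norm{S(y - p)}$, completing the proof.

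There is no serious obstacle here; the proof is essentially a chain of routine identifications. The only two points that require care are the identification $p \in \sQ'$ (which follows immediately from the defining property that $\sQ'$ intersects $\sP$ precisely in $E$, so that $p$ may serve as the in-subspace reference point) and the algebraic identity $\norm{S^T S w} = \norm{S w}$, which hinges on $S S^T = I_n$. Both are straightforward given the setup and Proposition~\ref{prop:project-onto-Q}.
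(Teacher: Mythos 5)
Your proof is correct, and it follows the same two-step skeleton as the paper's: reduce $d(y,\sQ')$ to the norm of a projection onto $\sQ^{\perp}$ via Proposition~\ref{prop:project-onto-Q}, then pass from $\norm{S^T S(y-p)}$ to $\norm{S(y-p)}$ using $SS^T = I_n$. The only real difference is how the translation is handled: the paper writes $d(y, \sQ') = d(y+v, \sQ)$ and then substitutes the explicit identity $v = -S^T S p$ (which itself rests on the observation that the point of $\sQ$ nearest to $p \in E$ is $\Proj_{\sQ}(p) = p - S^T S p$), whereas you sidestep $v$ altogether by noting that $E \subseteq \sQ'$ --- immediate from $H = E + v$ and $\sQ' = \sQ - v$, or from the paper's statement that $\sQ'$ intersects $\sP$ at $E$ --- so that $p$ itself can serve as the reference point of the affine subspace $\sQ'$. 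Your variant is marginally cleaner in that it never needs an explicit formula for $v$; otherwise the two arguments use the same ingredients and the same final algebra.
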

\begin{proof}
	Since $\sQ' = \sQ - v$, we have
	\begin{align*}
		d(y, \sQ') &= d(y + v, \sQ)\\
		&= \norm{\Proj_{\sQ^{\perp}}(y + v)}\\
		&= \norm{S^T S (y + v)}
		\qquad \Comment{By Proposition~{\ref{prop:project-onto-Q}}}\\
		&= \norm{S^T S (y - S^T S p)} \qquad \Comment{Since $v = -
		S^T S p$}\\
		&= \norm{S^T S (y - p)} \qquad \Comment{Since $SS^T = I_n$}\\
		&= \norm{S (y - p)}
	\end{align*}
\end{proof}

\end{document}